\journal{European Journal of Operations Research}
\DeclarePairedDelimiter\abs{\lvert}{\rvert}%
\DeclareMathOperator*{\argmax}{arg\,max} 
\numberwithin{table}{section} 
\numberwithin{figure}{section} 
\newtheorem{theorem}{Theorem}
\newtheorem{definition}[theorem]{Definition}
\newenvironment{proof}[1][Proof]{\noindent\textbf{#1.} }{$\Box$\\}
\begin{document}

\begin{frontmatter}

\title{Assessment of the influence of features on a classification problem: an application to COVID-19 patients}

%
%
%
%
%

\author[mymainaddress]{Laura Davila-Pena\corref{mycorrespondingauthor}}
\cortext[mycorrespondingauthor]{Corresponding author}
\ead{lauradavila.pena@usc.es}

\author[mysecondaryaddress]{Ignacio García-Jurado}
\ead{ignacio.garcia.jurado@udc.es}

\author[mymainaddress]{Balbina Casas-Méndez}
\ead{balbina.casas.mendez@usc.es}

\address[mymainaddress]{MODESTYA Research Group, Department of Statistics, Mathematical Analysis and Optimisation and IMAT, Faculty of Mathematics, University of Santiago de Compostela, Campus Vida, 15782, Santiago de Compostela, Spain.}
\address[mysecondaryaddress]{MODES Research Group, Department of Mathematics and CITIC, Faculty of Computer Science, University of A Coruña, Campus de Elviña, 15071, A Coruña, Spain.}

\begin{abstract}
This paper deals with an important subject in classification problems addressed by machine learning techniques: the evaluation of the influence of each of the features on the classification of individuals. Specifically, a measure of that influence is introduced using the Shapley value of cooperative games. In addition, an axiomatic characterisation of the proposed measure is provided based on properties of efficiency and balanced contributions. Furthermore, some experiments have been designed in order to validate the appropriate performance of such measure. Finally, the methodology introduced is applied to a sample of COVID-19 patients to study the influence of certain demographic or risk factors on various events of interest related to the evolution of the disease.
\end{abstract}

\begin{keyword}
Machine learning; Classification; Influence of features; Shapley value; COVID-19
\MSC[2010] 97R40\sep  91A80\sep 62H30
\end{keyword}

\end{frontmatter}


\section{Introduction}

A classification problem consists of predicting the value of a qualitative response variable for one or more individuals, making use of the values we know of certain variables (features) of such individuals. Those predictions are based on the knowledge obtained through a training sample of individuals whose values of the features and of the response variable are known.  Classification problems can be addressed by using machine learning techniques. Numerous classifiers have been proposed and analysed in the machine learning literature (see, for example, \citeauthor{FernandezDelgado2014},~ \citeyear{FernandezDelgado2014}). 

In this article we make use of some classification techniques to develop a methodological tool for the exploratory analysis of a training sample of the type described above. Specifically, our objective is to define a sensible measure to estimate the influence of the features on the value of the response variable. Below we illustrate our objective with a real problem of applied research that we recently faced. 

During the first wave of COVID-19 in Spain we had access to a database of 10,454 patients from Galicia (a region in the northwest of Spain) infected with COVID-19 from March 6, 2020 to May 7, 2020. Knowing the characteristics of individuals that significantly increase their probability of needing access to certain health infrastructures is highly useful for health authorities to make the right decisions. Therefore, we set out to use these data to find out which were the values of the features that most influenced the worsening of an infected patient's condition, so that he or she had to be hospitalised, had to be admitted to the ICU or even died.

The problem of studying the influence of features on the values of the response variable that we tackle in this paper has been treated with several differentiating aspects in other works from the  literature. For instance, \cite{Ghaddar2018} introduce an iterative approach to address feature selection in classification using support vector machines and apply it to a case of medical tumours diagnosis. In a sense, the selection of features is a problem prior to the study of the influences we discuss here, because we start with an already selected set of features and then comparatively study their influences.

In the context of classification, \cite{Kononenko2010} introduce a general procedure to assess the importance that the various features have had in the classification of a particular individual. Our approach is different because it is not locally oriented: we do not attempt to evaluate the influence of each feature on the classification of a particular individual, but rather to evaluate the influence of each feature value on the response variable. 

Probably the closest paper to the subject of our research is \cite{Datta2015}. In that paper, the authors also study how influential are the various features in a classification problem. They theoretically base their measure of influence in the binary case, that is, when both the features and the response variable take only two possible values. However, their measure of influence can also be used in the general non-binary case. Another difference with our approach is that they start from a set of observed cases of the feature vectors and an already fixed classifier, and study the influence of each feature for that classifier. In our approach we start from a training sample of individuals for whom we have observed their values of the features and of the response variable; we intend to know the influence of the feature values on the response in the population from which the training sample has been drawn. It is certainly possible to use the approach of \cite{Datta2015} to address our problem: train a classifier with the training sample, and then apply Datta et al.'s measure of influence. In fact, in Section~\ref{sec:experiments} we compared the latter approach with our own.

A common point of \cite{Kononenko2010}, \cite{Datta2015} and our work is that all three make extensive use of cooperative game theory tools, specially the Shapley value. The Shapley value  (\citeauthor{Shapley1953}, \citeyear{Shapley1953}) is a rule for distributing the profits generated by a collection of cooperating agents and it has multiple applications in very diverse fields: just to give a few instances, \cite{Liu2020} use the Shapley value for water resource allocation in multinational river basins, \cite{Saavedra2020} propose a new quota system for the milk market that is based again on the Shapley value, \cite{Li2020} make use of the Shapley value in their study of alliance formation in an assembly system where several upstream complementary suppliers produce components and sell them to a downstream manufacturer. \cite{Algaba2019} is a recent review of the Shapley value, its variants, and its applications.

The organisation of this paper is as follows. Section~\ref{sec:theory} presents the influence measure and discusses its theoretical basis, including an axiomatic characterisation. In Section~\ref{sec:experiments} various experiments are carried out to validate in practice the behaviour of our measure, which is also compared with another approach from the literature. Section~\ref{sec:covid} uses the measure to explore data from a sample of COVID-19 patients to detect features that affect mortality, ICU admission, and patient hospitalisation, and to evaluate the influence of such features. Finally, Section~\ref{sec:conclusions} summarises the main conclusions of this work.

\section{Assessing Influence in Classification} \label{sec:theory}

We start this section by formally establishing what we mean by {\em classification problem}. In one such problem we have a vector of features $X=(X_1,\dots ,X_k)$ and a response variable $Y$.  
$K=\{1,\dots,k\}$ denotes the set of indices of the features.
Each feature $X_j$ takes values in a finite set ${\cal A}_j$ and $Y$ takes values in a finite set ${\cal B}$. We also have a training sample ${\cal M}=\{(X^i,Y^i)\}_{i=1}^n$, where $X^i=(X_1^i,\dots ,X_k^i)$ and $Y^i$ are the observed values of the features and the response variable corresponding to individual $i$. A classification problem is thus characterised by a triplet $(X,Y,{\cal M})$.

A {\em classifier} trained with sample ${\cal M}$ is a map $f^{\cal M}$ that assigns to every\phantom{a} $a\in {\cal A}={\cal A}_1\times\dots\times{\cal A}_k$ (an observation of $X$) a probability distribution over ${\cal B}$, i.e., $f^{\cal M}(a)=(f^{\cal M}_b(a))_{b\in{\cal B}}$ with $f^{\cal M}_b(a)\geq0$, for all $b\in{\cal B}$, and $\sum_{b\in{\cal B}}f^{\cal M}_b(a)=1$. Each $f^{\cal M}_b(a)$ is the estimated probability that an individual whose observed values of the features are given by $a$ belongs to group $b$ of the response variable $Y$. From now on, ${\cal A}_V$, $a_V$, $X_V$, and $X^i_V$ will denote the restrictions of ${\cal A}$, $a$, $X$, and $X^i$ to the variables of $V$, respectively (for all $V\subseteq K$). 

Our goal in this section is to use classification techniques to define a measure that allows us to study the influence of the features on the response variable. The formal definition of an influence measure  is the one included below.

\begin{definition}\label{def:infl_measure}
	An {\em influence measure} for $(X,Y,{\cal M})$ is a map $I$ that assigns to every $a_R\in {\cal A}_R$ ($R\subseteq K$), $b\in{\cal B}$, and $T\subseteq K$ $(T\neq \emptyset)$ a vector $I(a_R,b,T)=(I_l(a_R,b,T))_{l\in T}\in\mathbb{R}^{T}$. The vector $I(a_R,b,T)$ provides an evaluation of the influence that each feature $X_l$ $(l \in T)$ has on whether the response is worth $b$ when $X_R$ is worth $a_R$ and we only take into account the features $\{X_l\}_{l\in T}$.
\end{definition}

Section~\ref{sec:covid} illustrates the interest of having a sensible influence measure. In this section we introduce and theoretically support one based on the Shapley value of cooperative games. In order to facilitate the reader's understanding, we include the definition of the Shapley value below. First, recall that a cooperative game is a par $(N,v)$, where $N$ is the finite set of players, and $v:2^{N} \rightarrow \mathbb{R}$ is the characteristic function of the game, which satisfies $v(\emptyset)=0$. We usually interpret $v(S)$ as the gain that coalition $S\subseteq N$ can obtain. Also, $G(N)$ represents the set of all cooperative games with set of players $N$. In general, we identify $(N,v)$ with its characteristic function, $v$. An extensively addressed problem in cooperative games is to allocate $v(N)$ among the cooperating agents. One of the most important allocation rules is the Shapley value, $\Phi:G(N) \rightarrow \mathbb{R}^{N}$, which represents a fair compromise for the players and it is defined by the following expression:
\begin{equation*}
	\Phi_{i}(v) = \sum_{S\subseteq N\setminus\{i\}}{\dfrac{\abs{S}!\,(\abs{N}-\abs{S}-1)!}{\abs{N}!}\left(v(S\cup \{i\}) - v(S)\right)},
\end{equation*}
for all $v\in G(N)$ and $i\in N$. For more details on cooperative games see, for instance, \cite{Gonzalez2010}.

 Next, we consider two desirable properties and prove that there exists a unique influence measure that fulfils them: the one based on the Shapley value. The first property takes into account that a measure of influence simply distributes among the $T$ features the total influence that such features have in that the value of the response variable is $b$ when $X_R$ equals $a_R$. One way to estimate that total influence using the classifier $f^{\cal M}$ is given by the following expression:
\begin{equation}\small
	\label{exp1:davilaetal}
	\frac{1}{n^b_{a_R}}\sum_{(X^i,Y^i)\in {\cal M}^b_{a_{R}}}\left(\frac{1}{\abs{{\cal A}_{K\backslash T}}}\sum_{a'_{K\backslash T}\in {\cal A}_{K\backslash T}}f^{{\cal M}}_b(X^i_T,a'_{K\backslash T})-\frac{1}{\abs{{\cal A}}}\sum_{a'\in {\cal A}}f^{{\cal M}}_b(a')\right),
\end{equation}
where ${\cal M}^b_{a_R}$ denotes the subsample of ${\cal M}$ formed by the observations $(X^i,Y^i)$ with $X^i_R=a_R$ and $Y^i=b$, and $n^b_{a_R}$ denotes the size of the subsample ${\cal M}^b_{a_R}$. 

Notice that expression (\ref{exp1:davilaetal}) can be interpreted as an estimation of the variability of the response variable due to the $T$ features (using $f^{\cal M}$). Therefore, the first property we ask for an influence measure is the $f^{\cal M}$-Efficiency below.

\vspace*{0.35cm}
\noindent
{\bf $f^{\cal M}$-Efficiency.} An influence measure $I$ satisfies $f^{\cal M}$-Efficiency if, for every $(X,Y,{\cal M})$, every $a_R\in {\cal A}_R$ ($R\subseteq K$), $b\in {\cal B}$, and $T\subseteq K$ ($T\neq \emptyset$), it holds that $\sum_{l\in T}{I_l(a_R,b,T)}$ is equal to the amount in expression (\ref{exp1:davilaetal}).

\vspace*{0.25cm}

The second property that we consider is a fairness property that treats all features in a balanced way. Informally, it states that given two of these features, the effect of ignoring one to the measure of the influence of the other is identical for both features. Note that the marginal loss or gain of influence that the inclusion or exclusion of one feature causes to another feature is due to the dependency that exists between the two. The fact that the dependence between features is symmetrical, makes advisable the property of balanced contributions.

\vspace*{0.25cm}

\noindent
{\bf Balanced Contributions.} An influence measure satisfies Balanced Contributions if, for every $(X,Y,{\cal M})$, every $a_R\in {\cal A}_R$ ($R\subseteq K$), $b\in {\cal B}$, $T\subseteq K$ ($T\neq \emptyset$), and $l,m\in T$ with $l\neq m$,
$$I_l(a_R,b,T)-I_l(a_R,b,T\backslash \{m\})=I_m(a_R,b,T)-I_m(a_R,b,T\backslash \{l\}).$$


Now we state and prove the main mathematical result of this section. It provides a characterisation and a formal expression of an influence measure that satisfies all the properties introduced above.

\begin{theorem}\label{th:infl_measure}
	There exists a unique influence measure for $(X,Y,{\cal M})$ which satisfies the properties of $f^{\cal M}$-Efficiency and Balanced Contributions. For all $a_R\in {\cal A}_R$ ($R\subseteq K$), $b\in {\cal B}$, $T\subseteq K$ $(T\neq \emptyset)$ and $l\in T$, this measure (that we denote by  $I^{\Phi}$) is given by
	\begin{equation}\label{eq:influence_measure}
		I_l^{\Phi}(a_R,b,T)=\frac{1}{n^b_{a_R}}\sum_{(X^i,Y^i)\in {\cal M}^b_{a_R}}\Phi_l(v^b_{X^i}|_T),
	\end{equation}
	where $\Phi$  denotes the Shapley value,
	$v^b_{X^i}$ denotes the game with set of players $K$ given by
	\begin{equation}
		v^b_{X^i}(S)=\frac{1}{\abs{{\cal A}_{K\backslash S}}}\sum_{a'_{K\backslash S}\in {\cal A}_{K\backslash S}}f^{{\cal M}}_b(X^i_S,a'_{K\backslash S})-\frac{1}{\abs{{\cal A}}}\sum_{a'\in {\cal A}}f^{{\cal M}}_b(a'),
		\label{eg:game}
	\end{equation}
	for all $S\subseteq K$,
	and $v^b_{X^i}|_T$ denotes the restriction of the game $v^b_{X^i}$ to the subsets of $T$.\footnote{The game in $(\ref{eg:game})$ results to be the same as the one used in \cite{Kononenko2010} to assess the importance of the various features in the classification of a particular individual in a classification problem.} 
\end{theorem}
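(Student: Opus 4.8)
The statement is a transcription, into the classification setting, of the classical characterisation of the Shapley value by efficiency and balanced contributions, so the plan is to reduce everything --- individual by individual in the subsample ${\cal M}^b_{a_R}$ --- to two standard facts about the Shapley value of a cooperative game $w$ on a player set $N$: efficiency, $\sum_{i\in N}\Phi_i(w)=w(N)$, and the balanced contributions property, $\Phi_l(w)-\Phi_l(w|_{N\setminus\{m\}})=\Phi_m(w)-\Phi_m(w|_{N\setminus\{l\}})$ for $l,m\in N$ (Myerson, 1980). The bridge is the identity $\bigl(v^b_{X^i}|_T\bigr)(T)=v^b_{X^i}(T)$, whose right-hand side, by~(\ref{eg:game}), is exactly the summand appearing inside expression~(\ref{exp1:davilaetal}); likewise $v^b_{X^i}|_{T\setminus\{m\}}=(v^b_{X^i}|_T)|_{T\setminus\{m\}}$, so restricting to $T$ and then dropping $m$ coincides with restricting directly to $T\setminus\{m\}$.

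For existence I would take $I^{\Phi}$ as in~(\ref{eq:influence_measure}) and check the two axioms directly. For $f^{\cal M}$-Efficiency, sum~(\ref{eq:influence_measure}) over $l\in T$, interchange the two finite sums, use Shapley efficiency on the game $v^b_{X^i}|_T$ to get $\sum_{l\in T}\Phi_l(v^b_{X^i}|_T)=v^b_{X^i}(T)$, and then recognise, via~(\ref{eg:game}), that $\frac{1}{n^b_{a_R}}\sum_{(X^i,Y^i)\in{\cal M}^b_{a_R}}v^b_{X^i}(T)$ is precisely~(\ref{exp1:davilaetal}). For Balanced Contributions, apply the balanced contributions property of the Shapley value to $w=v^b_{X^i}|_T$ inside the sum over ${\cal M}^b_{a_R}$, using the restriction identity above, and divide by $n^b_{a_R}$.

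For uniqueness I would argue by induction on $|T|$, with $a_R$ and $b$ fixed and $I,I'$ both admissible. If $|T|=1$, say $T=\{l\}$, then $f^{\cal M}$-Efficiency alone forces $I_l(a_R,b,\{l\})$ to equal~(\ref{exp1:davilaetal}), so $I=I'$ there. Assuming $I=I'$ for every feature set of size $<t$, let $|T|=t$. For $l\neq m$ in $T$, Balanced Contributions gives $I_l(a_R,b,T)-I_m(a_R,b,T)=I_l(a_R,b,T\setminus\{m\})-I_m(a_R,b,T\setminus\{l\})$, whose right-hand side involves only sets of size $t-1$ and is therefore common to $I$ and $I'$; hence all pairwise differences $I_l(a_R,b,T)-I_m(a_R,b,T)$ are determined. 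Fixing one $m\in T$, writing $I_l(a_R,b,T)=I_m(a_R,b,T)+\bigl[I_l(a_R,b,T\setminus\{m\})-I_m(a_R,b,T\setminus\{l\})\bigr]$ and summing over $l\in T$, the left-hand side is fixed by $f^{\cal M}$-Efficiency (it equals~(\ref{exp1:davilaetal})), so $I_m(a_R,b,T)$, and then every $I_l(a_R,b,T)$, is uniquely determined; this forces $I=I'$, and by the existence part $I=I^{\Phi}$.

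The only genuinely non-routine ingredient is the balanced contributions property of the Shapley value; if one prefers a self-contained proof to citing Myerson (1980), the cleanest route is to use linearity of $\Phi$ in the game together with its values on unanimity games, or the random-order formula, in which $\Phi_l(w)-\Phi_l(w|_{N\setminus\{m\}})$ rearranges into an expression symmetric in $l$ and $m$. Everything else --- interchanging finite sums, matching~(\ref{eg:game}) with the summand of~(\ref{exp1:davilaetal}), and the induction bookkeeping --- is routine.
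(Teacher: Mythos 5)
Your proposal is correct and takes essentially the same route as the paper's own proof: existence by applying Shapley efficiency and Myerson's balanced contributions termwise to the games $v^b_{X^i}|_T$ and averaging over ${\cal M}^b_{a_R}$, and uniqueness by induction on $|T|$, using Balanced Contributions plus the induction hypothesis to pin down all pairwise differences and $f^{\cal M}$-Efficiency to pin down the sum.
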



\begin{proof}
	{\it Existence.}
	To show that $I^{\Phi}$ satisfies $f^{\cal M}$-Efficiency, take $a_R\in {\cal A}_R$ ($R\subseteq K$), $b\in {\cal B}$, and $T\subseteq K$ $(T\neq \emptyset)$. \cite{Shapley1953} proves that the Shapley value of cooperative games satisfies an efficiency property. In our case, this property implies that
	$$\sum_{l\in T}{\Phi_l(v^b_{X^i}|_T)} = v^b_{X^i}(T).$$
	Applying this result we obtain that:
	\begin{equation*} \small
		\begin{split}
		& \sum_{l\in T}{I_l^{\Phi}(a_R,b,T)}\\
		= \: & \sum_{l\in T}{\frac{1}{n^b_{a_R}}\sum_{(X^i,Y^i)\in {\cal M}^{b}_{a_R}}{\Phi_l(v^b_{X^i}|_T)}}\\  
		= \: & \frac{1}{n^b_{a_R}}\sum_{(X^i,Y^i)\in {\cal M}^{b}_{a_R}}\sum_{l\in T}{\Phi_l(v^b_{X^i}|_T)}\\  
		= \: & \frac{1}{n^b_{a_R}}\sum_{(X^i,Y^i)\in {\cal M}^b_{a_R}}v^b_{X^i}(T)\\
		= \: & \frac{1}{n^b_{a_R}}\sum_{(X^i,Y^i)\in {\cal M}^b_{a_R}}\left(\frac{1}{\abs{{\cal A}_{K\backslash T}}}\sum_{a'_{K\backslash T}\in {\cal A}_{K\backslash T}}f^{{\cal M}}_b(X^i_T,a'_{K\backslash T})-\frac{1}{\abs{{\cal A}}}\sum_{a'\in {\cal A}}f^{{\cal M}}_b(a')\right).
		\end{split}
	\end{equation*}

	To show that $I^{\Phi}$ satisfies Balanced Contributions, let $a_R\in {\cal A}_R$ ($R\subseteq K$), $b\in {\cal B}$, $T\subseteq K$ $(T\neq \emptyset)$, and $l,m\in T$ with $l\neq m$. \cite{Myerson1980} proves that the Shapley value of cooperative games satisfies a property of balanced contributions. In our case, this property implies that 
	$$ \Phi_{l}(v^{b}_{X^i}|_T) - \Phi_{l}(v^{b}_{X^i}|_{T\backslash \{m\}}) = \Phi_{m}(v^{b}_{X^i}|_T) - \Phi_{m}(v^{b}_{X^i}|_{T\backslash \{l\}}).$$
	Applying this result we obtain that:	

\begin{equation*}
	\begin{split}
		& I_l^{\Phi}(a_R,b,T)-I_l^{\Phi}(a_R,b,T\backslash \{m\})\\
		= \: & \frac{1}{n^b_{a_R}}\sum_{(X^i,Y^i)\in {\cal M}^b_{a_R}}\Phi_l(v^b_{X^i}|_T)-\frac{1}{n^b_{a_R}}\sum_{(X^i,Y^i)\in {\cal M}^b_{a_R}}\Phi_l(v^b_{X^i}|_{T\backslash \{m\}})\\
		= \: &  \frac{1}{n^b_{a_R}}\sum_{(X^i,Y^i)\in {\cal M}^b_{a_R}}\left(\Phi_l(v^b_{X^i}|_T)-\Phi_l(v^b_{X^i}|_{T\backslash \{m\}})\right)\\
		= \: &  \frac{1}{n^b_{a_R}}\sum_{(X^i,Y^i)\in {\cal M}^b_{a_R}}\left(\Phi_m(v^b_{X^i}|_T)-\Phi_m(v^b_{X^i}|_{T\backslash \{l\}})\right)\\
		= \: &  \frac{1}{n^b_{a_R}}\sum_{(X^i,Y^i)\in {\cal M}^b_{a_R}}\Phi_m(v^b_{X^i}|_T)-\frac{1}{n^b_{a_R}}\sum_{(X^i,Y^i)\in {\cal M}^b_{a_R}}\Phi_m(v^b_{X^i}|_{T\backslash \{l\}})\\
		= \: & I_m^{\Phi}(a_R,b,T)-I_m^{\Phi}(a_R,b,T\backslash \{l\}).
	\end{split}
\end{equation*}
	\noindent
	{\it Uniqueness.}
	We show uniqueness by induction on the size of $T$. Suppose that $I^1$ and $I^2$ are two influence measures satisfying $f^{\cal M}$-Efficiency and Balanced Contributions.
	If $\abs{T} =1$, by $f^{\cal M}$-Efficiency, 
	$$I^1(a_R,b,T)=\frac{1}{n^b_{a_R}}\sum_{(X^i,Y^i)\in {\cal M}^b_{a_R}}v^b_{X^i}(T)=I^2(a_R,b,T).$$
	{Assume now that $I^1(a_R,b,S) = I^2(a_R,b,S)$ for all $S\subseteq T$ with $1\leq |S|<|T|$. Then by Balanced Contributions, for all $l,m\in T$, $l\neq m$,
		\begin{equation}
			I_l^1(a_R,b,T)-I^1_m(a_R,b,T)=I_l^2(a_R,b,T)-I^2_m(a_R,b,T).
			\label{eq:1}
		\end{equation}
		Using $f^{\cal M}$-Efficiency,
		\begin{equation}
			\sum_{l\in T}{I_l^1(a_R,b,T)}=\sum_{l\in T}{I^2_l(a_R,b,T)}.
			\label{eq:2}
		\end{equation}
	\vspace{-0.1cm}
		By (\ref{eq:1}) and (\ref{eq:2}) it is obtained that: \vspace{-0.1cm}
		$$I^1_l(a_R,b,T)=I^2_l(a_R,b,T)\ {\rm for}\ {\rm all}\ l\in T.$$
		This last expression gives the uniqueness.}
\end{proof}

\section{Empirical results} \label{sec:experiments}

In this section we show the performance of the proposed influence measure~(\ref{eq:influence_measure}) by means of a computational study. Three different experiments have been carried out using the software \texttt{R}. The objective of such simulations is to corroborate that the results obtained by the methodology introduced in the current work are in accordance with the expected ones. Furthermore, these results are compared with those obtained by the influence measure introduced in \cite{Datta2015}, which counts the number of times that a modification in a feature results in a different classification. We provide the formal definition of such an influence measure below.

\begin{definition}\label{def:infl_measure_datta}
	Given a training set $\mathcal{M} = \{(X^{i},Y^{i})\}_{i=1}^{n}$ and a classifier $f^{\mathcal{M}}$, the influence of the $j$-th feature is  \small
	\begin{equation*}
		\chi_{j}(f^{\mathcal{M}}) = \sum_{a'\in \{X^{i}\}}{\sum_{\substack{a_{j}\in  \mathcal{A}_{j}: \\ (a'_{-j},a_{j})\in \{X^{i}\}}}\min\left\{{\left|\argmax_{b\in \mathcal{B}}{ f_{b}^{\mathcal{M}}(a'_{-j},a_{j})}-\argmax_{b\in \mathcal{B}}{f_{b}^{\mathcal{M}}(a')}\right|},1\right\}},
	\end{equation*}
where $\{X^{i}\}$ denotes $\{(X_{1}^{i},\dots,X_{k}^{i})\}_{i=1}^{n}$, and $\mathcal{B}\subset \mathbb{N}$.
	
\end{definition}

The classifier used in this paper is Breiman's random forest classifier (\citeauthor{Breiman2001}, \citeyear{Breiman2001}), implemented in Weka\footnote{\url{http://www.cs.waikato.ac.nz/ml/weka}.} and used through \texttt{RWeka}\footnote{\url{https://cran.r-project.org/web/packages/RWeka/index.html}.}. This choice is motivated by the excellent result of the random forest type classifiers (see, for example, \citeauthor{FernandezDelgado2014}, \citeyear{FernandezDelgado2014}). The code was run on a quad-core Intel~i7-8665U CPU with 16GB RAM.

The procedure adopted in the experiments is as follows. We start from a sample of individuals from which their attributes and response are known, ${\cal M}=\{(X^i,Y^i)\}_{i=1}^n$. Right after, such sample is used to train a previously chosen classifier, obtaining $f^{\cal M}$.
To evaluate the influence of feature $X_j$ on the response $Y$ taking the value $b$, the quantities $I^{\Phi}_{j}(a_j,b,K)$ and $\sum_{l\in K}{I_{l}^{\Phi}(a_j,b,K)}$ are computed and analysed for all $a_j\in {\cal A}_j$.

For the first experiment, a sample of $1000$ instances with four binary features $\{X_{1}, X_{2}, X_{3}, X_{4}\}$ was generated. Such attributes take the values $0$ and $1$ with probability $0.5$ (hence, $a_{j} \in \mathcal{A}_{j}=\{0,1\}, \: j \in K$). In half of the instances, the value of $Y$ coincides with the value of $X_{1}$, while in the remaining instances the value of $Y$ coincides with the value of $X_{2}$; note thus that $b\in \mathcal{B}=\{0,1\}$. The following step is to select those observations whose assigned class was $b=1$. Afterwards, for each attribute $X_{j}, \: j\in K$, and each of its possible values, we study the influence that such feature had on the response when it took such value. Since the procedure by which the class has been generated is known, it is evident that the influence of attributes $X_{3}$ and $X_{4}$ should be independent of their values. Furthermore, the value 1 for features $X_{1}$ and $X_{2}$ should have a stronger influence in the classification than the value 0. Table~\ref{tab:simulation1} and Figure~\ref{fig:simulation1} present the results obtained for this simulation, which took a runtime of $9.3$ minutes.

Indeed, it can be observed that for attributes $X_{1}$ and $X_{2}$ the value $I_{j}^{\Phi}(a_{j},b,K)$ is positive when $a_{j}=1$ and negative when $a_{j}=0$, which means that features $X_{1}$ and $X_{2}$ taking the value $1$ works in favour of the response resulting in $1$, unlike what happens if these features are worth $0$. Note also that  $\sum_{l\in  K}I_{l}^{\Phi}(a_{j},b,K)$ is the total influence of the four features on the response being $1$ when feature $X_{j}$ takes the value $a_{j}$. In view of the results obtained, for features $X_{1}$ and $X_{2}$ the quantities $I_{j}^{\Phi}(a_{j},b,K)$ and  $\sum_{l\in  K}I_{l}^{\Phi}(a_{j},b,K)$   are closer when $a_{j}=1$ than when $a_{j}=0$. Thus, the total influence on the response being $1$ when either $X_{1}$ or $X_{2}$ are $1$, is in fact due to these specific attributes taking the value $1$. In the case of features $X_{3}$ and $X_{4}$, their influence is near $0$ whatever value they take.

\begin{table}[H]
	\begin{center}
		\resizebox{6.6cm}{!}{
		\begin{tabular}{|c|c|c|c|}
			\hline
			$X_{j}, j \in K$ & $a_{j}$ & $\sum_{l\in K}I^{\Phi}_{l}(a_{j},b,K)$ & $I^{\Phi}_{j}(a_{j},b,K)$ \\ 
			\hline \hline
			\multirow{2}{*}{$X_{1}$} & 0 & -0.002 & -0.250\\
			\cline{2-4}
			& 1 & 0.344  & 0.247 \\
			\hline \hline
			\multirow{2}{*}{$X_{2}$} & 0 & -0.019 &  -0.260 \\
			\cline{2-4}
			& 1 & 0.361 & 0.260  \\ 
			\hline \hline
			\multirow{2}{*}{$X_{3}$} & 0 & 0.268  & 0.000 \\
			\cline{2-4}
			& 1 & 0.268 & 0.000\\
			\hline \hline
			\multirow{2}{*}{$X_{4}$} & 0 & 0.258 & -0.010 \\
			\cline{2-4}
			& 1 & 0.277 & 0.010 \\
			\hline
		\end{tabular}
	}
		\caption{Results for simulation 1.}
		\label{tab:simulation1}
	\end{center}
\end{table}

\vspace{-0.7cm}
\noindent
\begin{figure}[H]
\begin{minipage}[c]{6.3cm}
		\begin{center}
			\includegraphics[width=5cm]{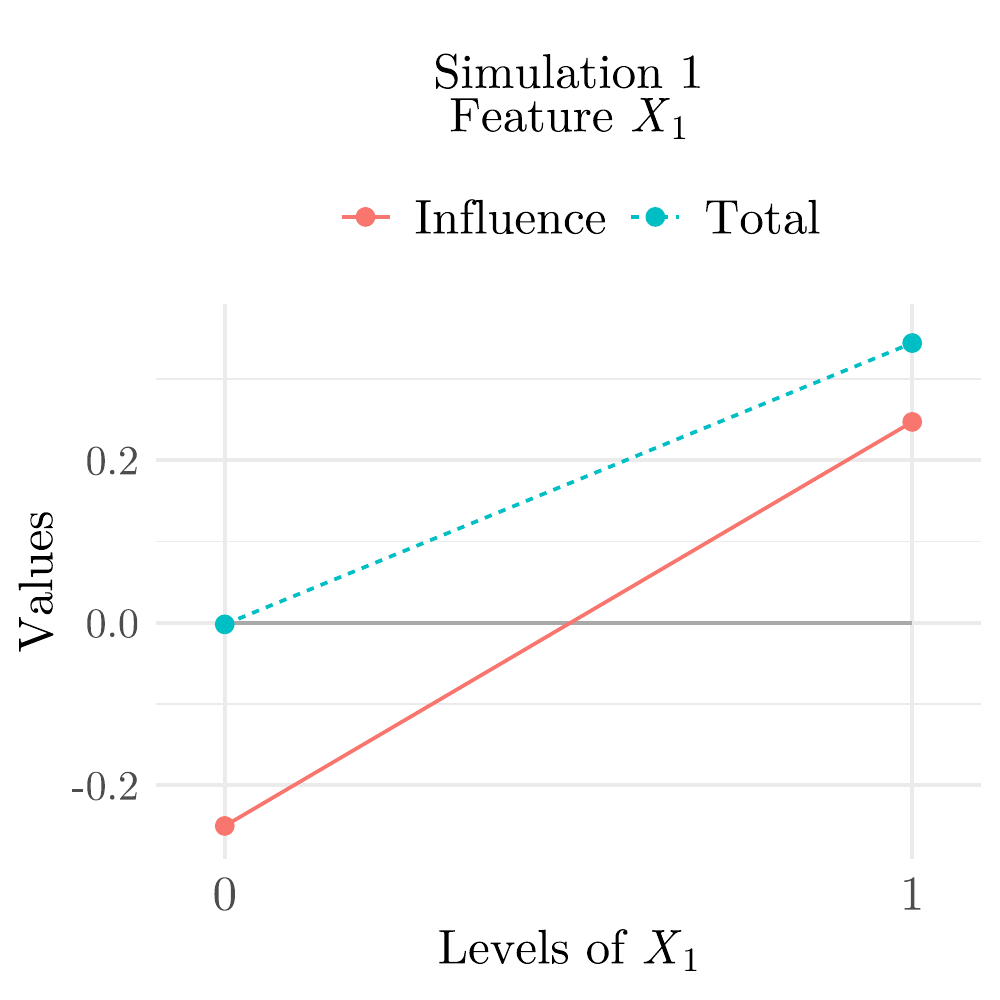}
		\end{center}
\end{minipage}
\hspace{0.1cm}
\begin{minipage}[c]{6.3cm}
		\begin{center}
			\includegraphics[width=5cm]{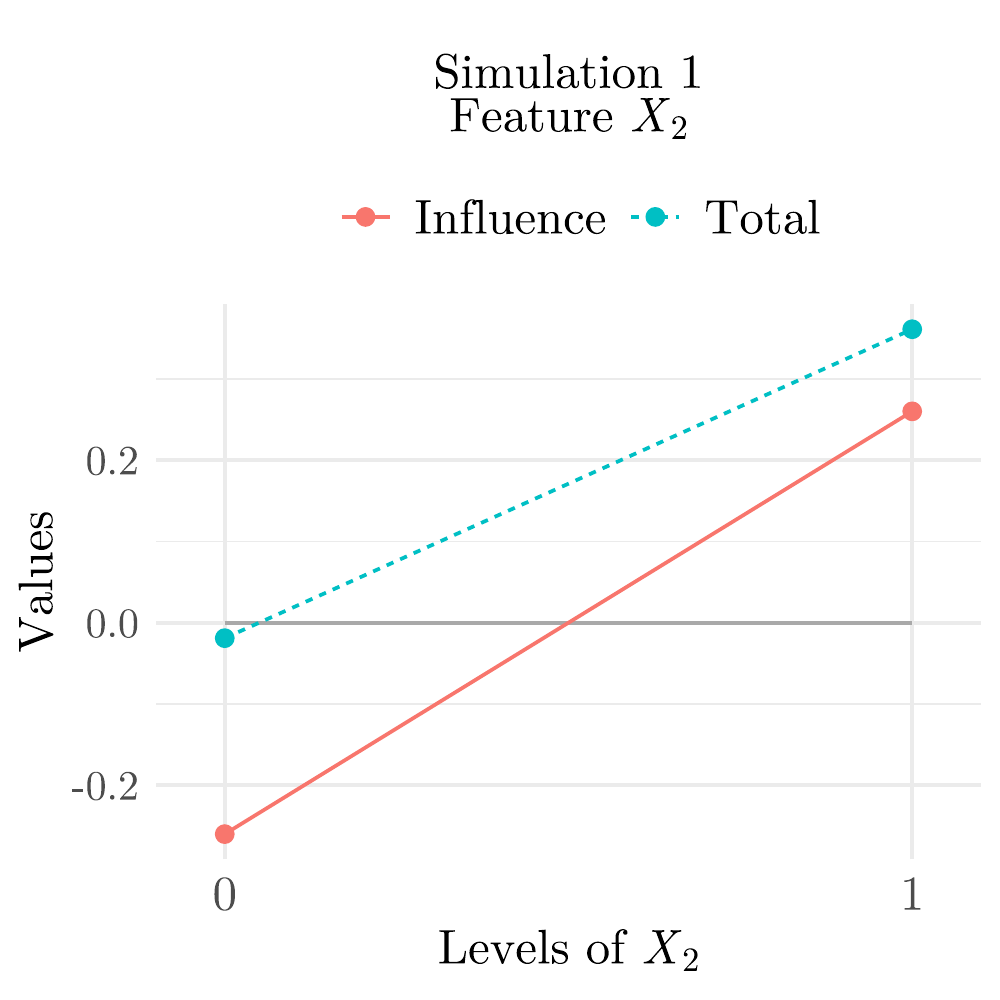}
		\end{center}
\end{minipage}

\noindent
\begin{minipage}[c]{6.3cm}
		\begin{center}
			\includegraphics[width=5cm]{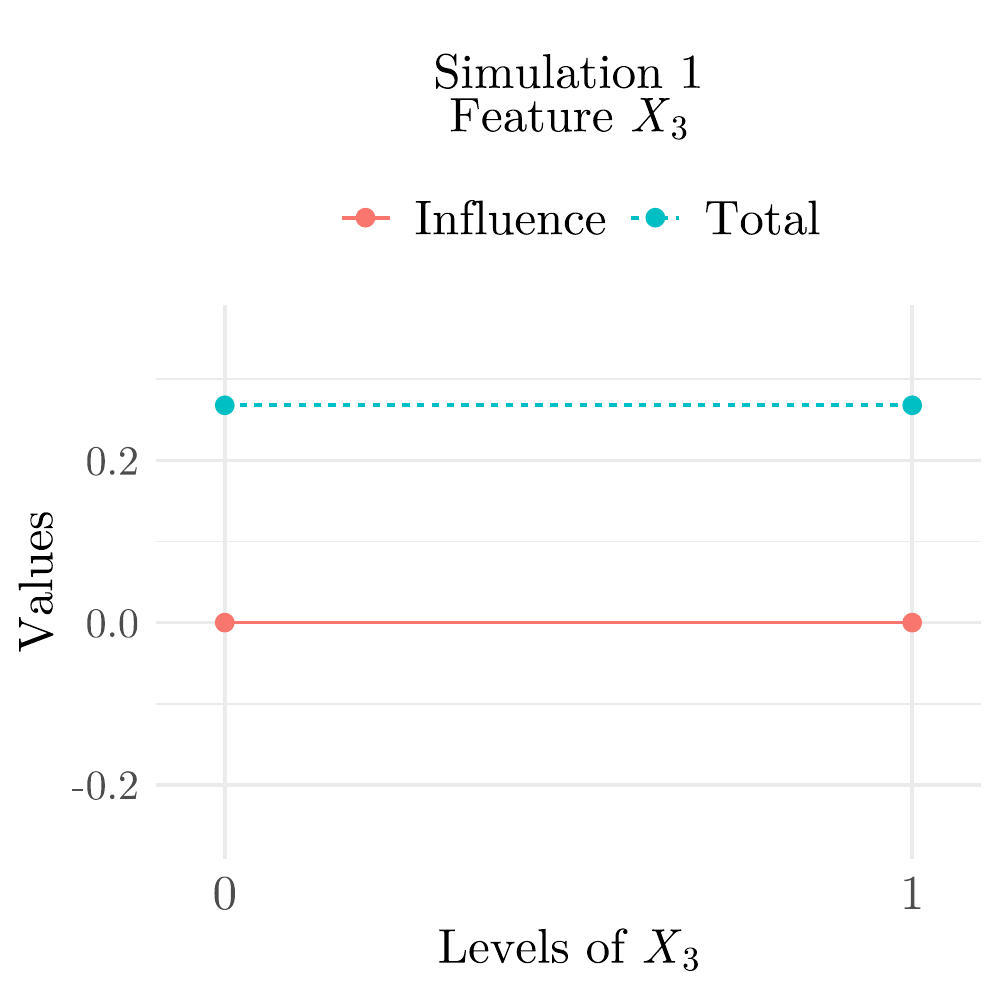}
		\end{center}
\end{minipage}
\hspace{0.2cm}
\begin{minipage}[c]{6.3cm}
		\begin{center}
			\includegraphics[width=5cm]{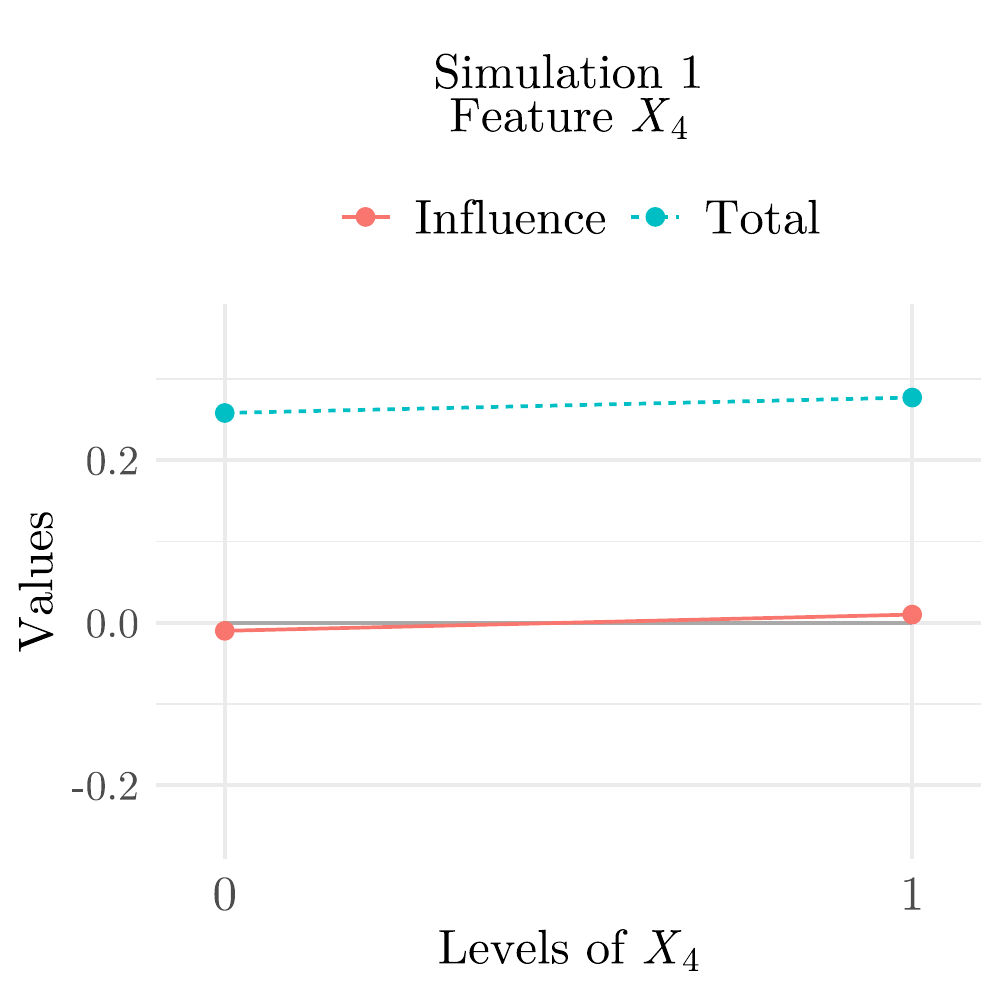}
		\end{center}
\end{minipage}
\caption{Influence and total influence for the features (Simulation 1).}
\label{fig:simulation1}
\end{figure}


Applying the procedure in \cite{Datta2015} to the previous experiment, we obtain the measure $(0.50, 0.50, 0.25, 0.25)$. As expected, features $X_{1}$ and $X_{2}$ present a higher influence than $X_{3}$ and $X_{4}$. Just as we have already mentioned, Datta et al.'s procedure measures the number of times that a change in a specific attribute produces a different response. Thus, it only takes positive values, which prevents us from knowing the direction of the influence. In our case, setting features $X_{1}$ and $X_{2}$ to $0$ works against the response being $1$, and this is made clear by the negative sign of their influences.

The second experiment differs from the previous one in the procedure to assign the class to the instances. The response is now generated as a binary vector which takes the values $0$ and $1$ with probability $0.5$, independently of the attributes. The goal of this simulation is to show that the influence of the features in the classification of the instances with response $b=1$ does not depend on the features' values. Table~\ref{tab:simulation2} and Figure~\ref{fig:simulation2} present the results obtained for this simulation. The computational time was $12.4$ minutes.

\begin{table}[H]
	\begin{center}
		\resizebox{7.5cm}{!}{
		\begin{tabular}{|c|c|c|c|}
			\hline
			$X_{j}, j \in K$ & $a_{j}$ & $\sum_{l\in K}I^{\Phi}_{l}(a_{j},b,K)$ & $I^{\Phi}_{j}(a_{j},b,K)$ \\ 
			\hline \hline
			\multirow{2}{*}{$X_{1}$} & 0 & -0.001 & -0.009\\
			\cline{2-4}
			& 1 & 0.017  & 0.011 \\
			\hline \hline
			\multirow{2}{*}{$X_{2}$} & 0 & -0.012 &  -0.019 \\
			\cline{2-4}
			& 1 & 0.026 & 0.023  \\ 
			\hline \hline
			\multirow{2}{*}{$X_{3}$} & 0 & 0.007  & -0.002 \\
			\cline{2-4}
			& 1 & 0.010 & 0.006\\
			\hline \hline
			\multirow{2}{*}{$X_{4}$} & 0 & 0.002 & -0.003 \\  
			\cline{2-4}
			& 1 & 0.014 & 0.005 \\
			\hline
		\end{tabular}
	}
		\caption{Results for simulation 2.}
		\label{tab:simulation2}
	\end{center}
\end{table}

Again, the outcomes are as expected: for each feature, there are barely differences in the values $I_{j}^{\Phi}(a_{j},b,K)$ and $\sum_{l\in K}I^{\Phi}_{l}(a_{j},b,K)$ when $a_{j}$ changes.
In this case, Datta et al.'s measure resulted in $(0.375,0.375,0.375,0.375)$. The response is not influenced by any one attribute more than the others. However, because the class was generated independently of the features, one would expect their influence to be zero.

\noindent
\begin{figure}[H]
\begin{minipage}[c]{6.3cm}
		\begin{center}
			\includegraphics[width=5.2cm]{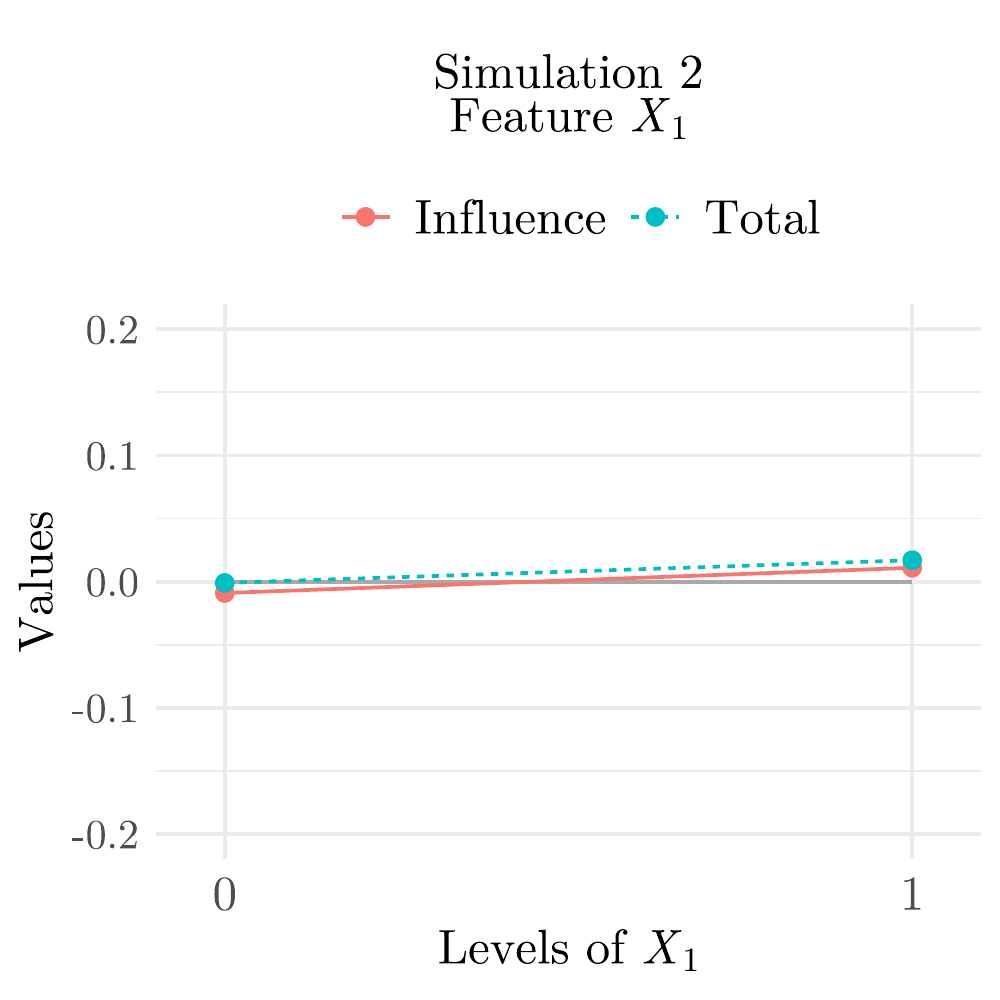}
		\end{center}
\end{minipage}
\hspace{0.2cm}
\begin{minipage}[c]{6.3cm}
		\begin{center}
			\includegraphics[width=5.2cm]{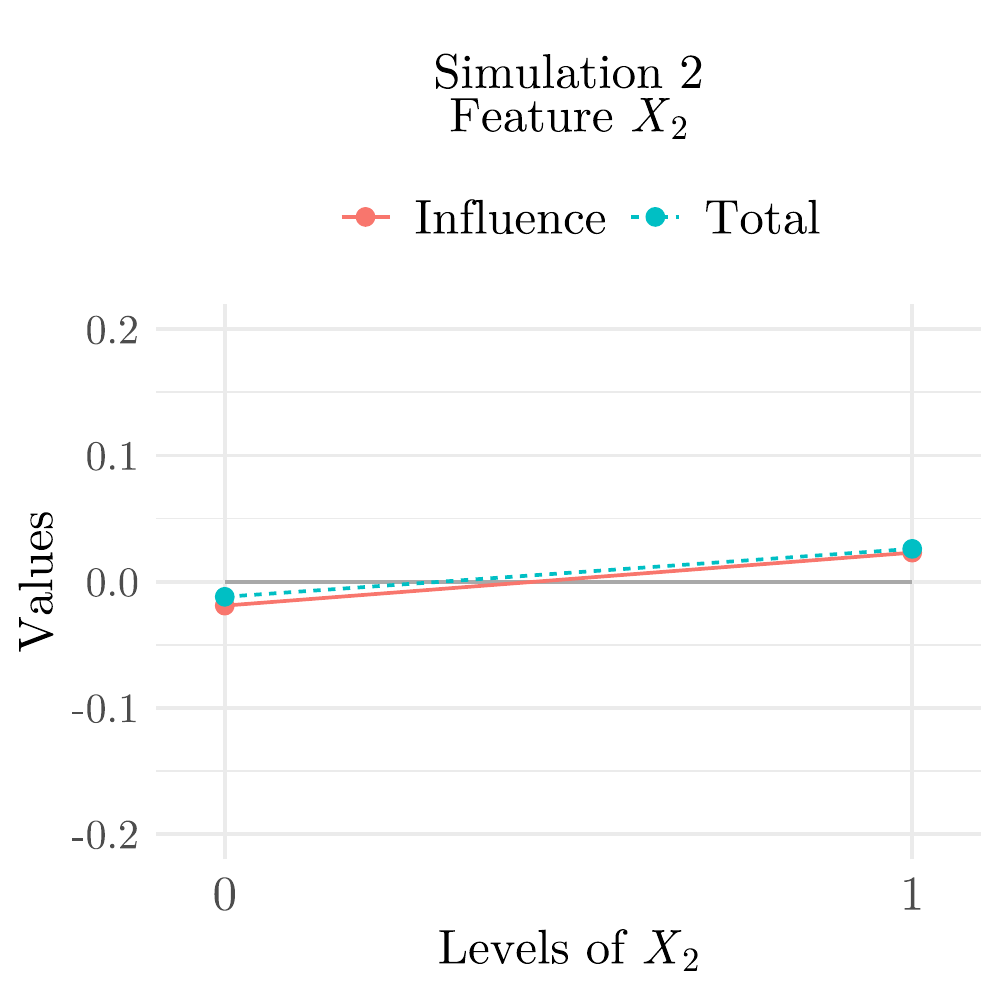}
		\end{center}
\end{minipage}

\noindent
\begin{minipage}[c]{6.3cm}
		\begin{center}
			\includegraphics[width=5.2cm]{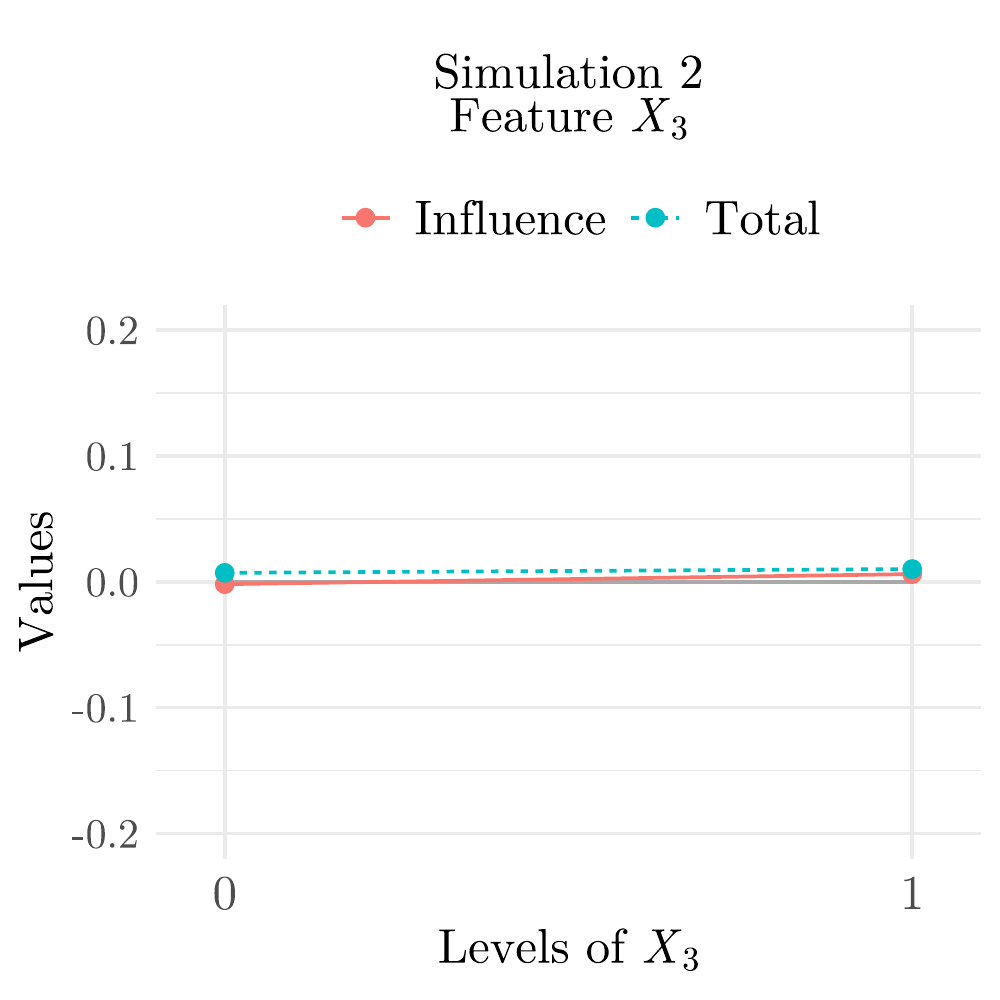}
		\end{center}
\end{minipage}
\hspace{0.2cm}
\begin{minipage}[c]{6.3cm}	
		\begin{center}
			\includegraphics[width=5.2cm]{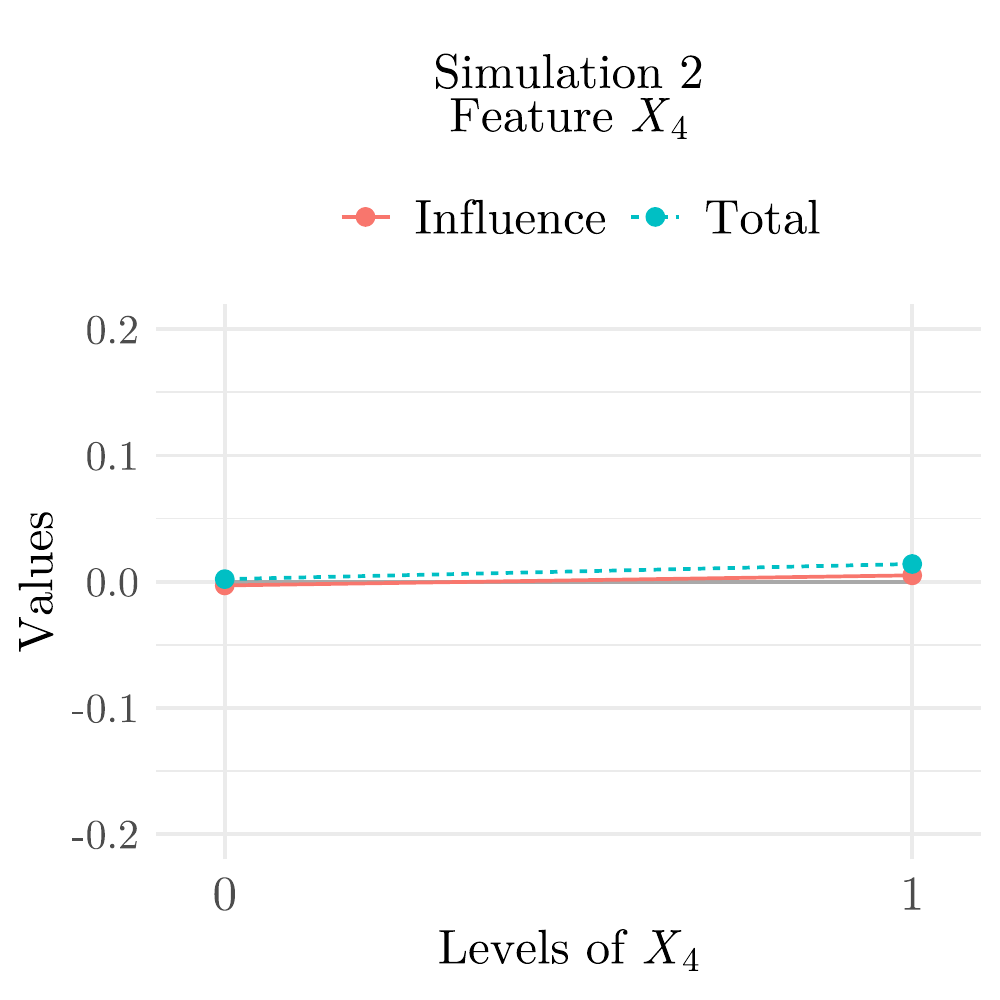}
		\end{center}
\end{minipage}
\caption{Influence and total influence for the features (Simulation 2).}
\label{fig:simulation2}
\end{figure}


Finally, we have considered the non-binary case. Now, the four attributes can take the values 0, 1 and 2 with equal probability, and the class of the response is computed as follows: in $1/3$ of the instances, it is the value of attribute $X_{1}$ that determines the response; while in the remaining $2/3$, it is attribute $X_{2}$ that determines it. Table~\ref{tab:simulation3} and 
Figure~\ref{fig:simulation3} illustrate the results. This took a runtime of $13.3$ minutes.

\begin{table}[H]
	\begin{center}
		\resizebox{7.3cm}{!}{
		\begin{tabular}{|c|c|c|c|}
			\hline
			$X_{j}, j \in K$ & $a_{j}$ & $\sum_{l\in K}I^{\Phi}_{l}(a_{j},b,K)$ & $I^{\Phi}_{j}(a_{j},b,K)$ \\ 
			\hline \hline
			\multirow{3}{*}{$X_{1}$} & 0 & 0.364 & -0.091 \\
			\cline{2-4}
			& 1 & 0.455 & 0.233 \\
			\cline{2-4}
			& 2 &  0.360 & -0.120   \\
			\hline \hline
			\multirow{3}{*}{$X_{2}$} & 0 &  0.105 & -0.172 \\
			\cline{2-4}
			& 1 & 0.495  & 0.445  \\ 
			\cline{2-4}
			& 2  & 0.005 & -0.245  \\ 
			\hline \hline
			\multirow{3}{*}{$X_{3}$} & 0  & 0.421 & -0.012  \\
			\cline{2-4}
			& 1  & 0.391 & 0.012 \\
			\cline{2-4}
			& 2  & 0.424  & 0.016\\
			\hline \hline
			\multirow{3}{*}{$X_{4}$} & 0 & 0.410 & 0.042 \\  
			\cline{2-4}
			& 1 & 0.385 & -0.041  \\
			\cline{2-4}
			& 2  & 0.435 & 0.020 \\
			\hline
		\end{tabular}
	}
		\caption{Results for simulation 3.}
		\label{tab:simulation3}
	\end{center}
\end{table}\vspace{-0.5cm}
The outcomes obtained show that changes in features $X_{3}$ and $X_{4}$ do not affect to the response being $b=1$, and their influence is almost zero whatever their values. Nevertheless, the value 1 of attributes $X_{1}$ and $X_{2}$ has a positive influence, which is larger in the case of the latter. On the contrary, when these attributes take the values $0$ and $2$, their influence is negative. This speaks against the class resulting in 1. In this case, the influence measure of Datta et al. is $(0.321, 1.827, 0.296, 0.296)$. This result shows that $X_{2}$ is the most influential feature, and that $X_{1}$ is more relevant than $X_{3}$ and $X_{4}$. Nevertheless, this measure does not properly capture the magnitude of how much more influential attribute $X_{1}$ is in comparison to $X_{3}$ and $X_{4}$.

\noindent
\begin{figure}[H]
\begin{minipage}[c]{6.3cm}
		\begin{center}
			\includegraphics[width=5cm]{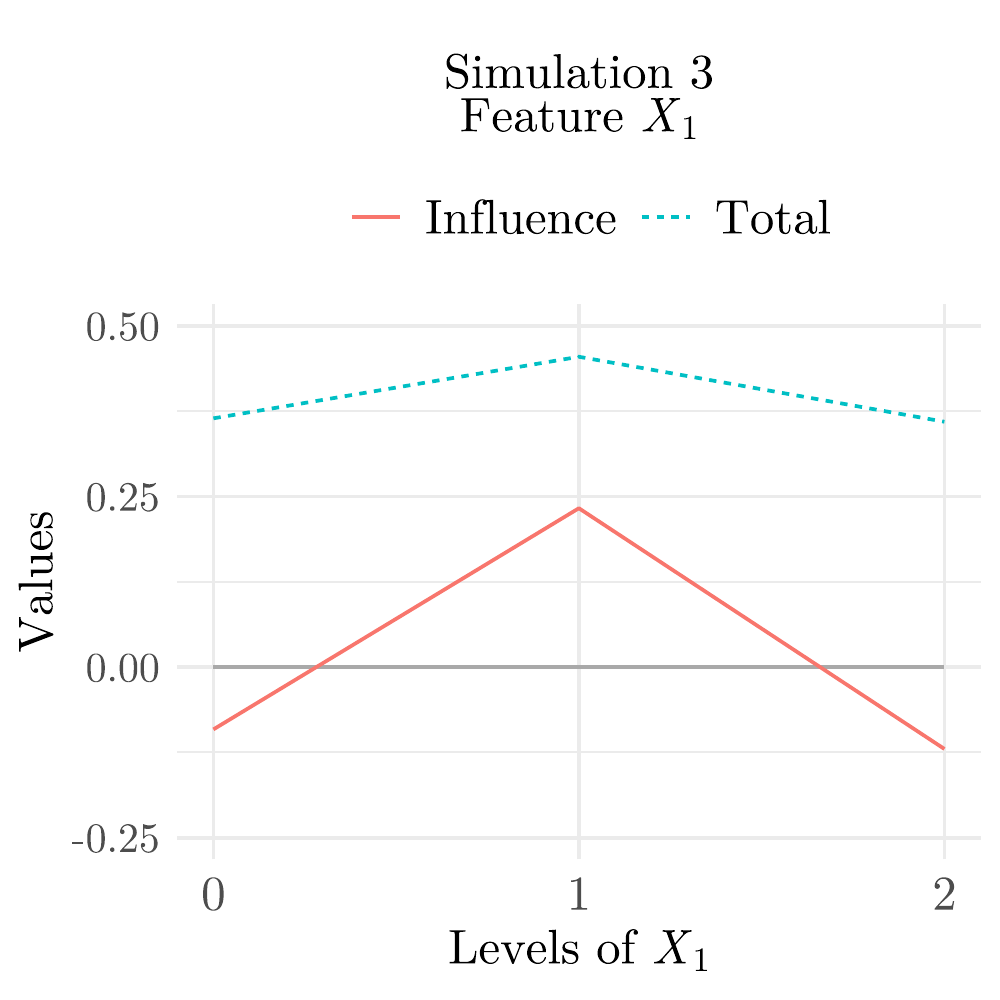}
		\end{center}
\end{minipage}
\hspace{0.1cm}
\begin{minipage}[c]{6.3cm}
		\begin{center}
			\includegraphics[width=5cm]{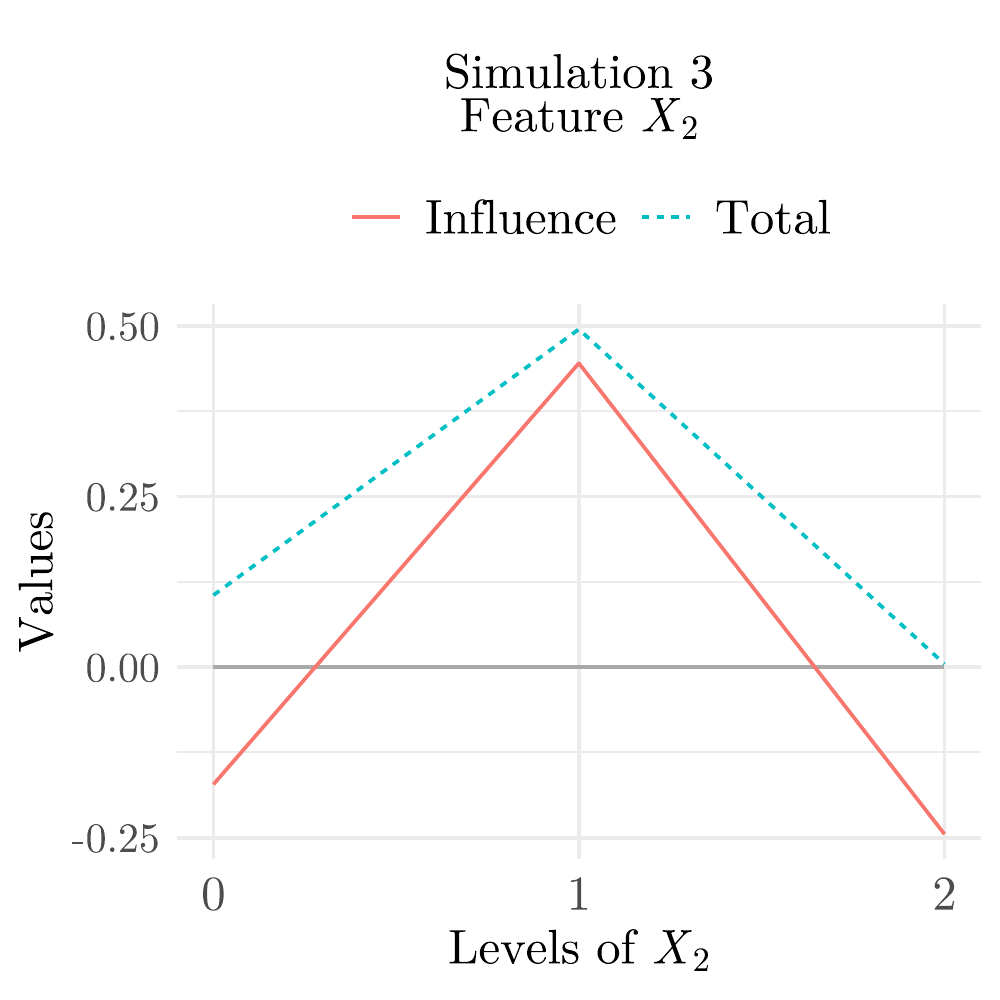}
		\end{center}
\end{minipage}

\noindent
\begin{minipage}[c]{6.3cm}
		\begin{center}
			\includegraphics[width=5cm]{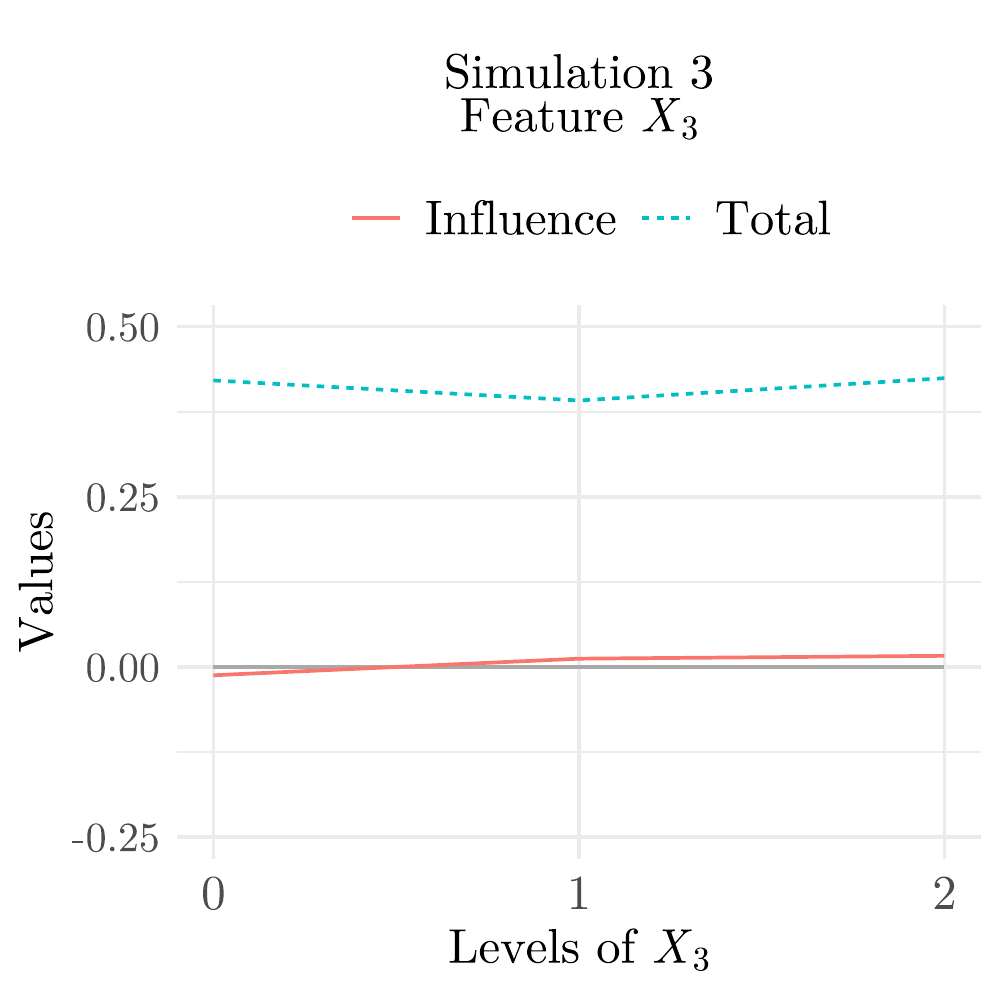}
		\end{center}
\end{minipage}
\hspace{0.2cm}
\begin{minipage}[c]{6.3cm}
		\begin{center}
			\includegraphics[width=5cm]{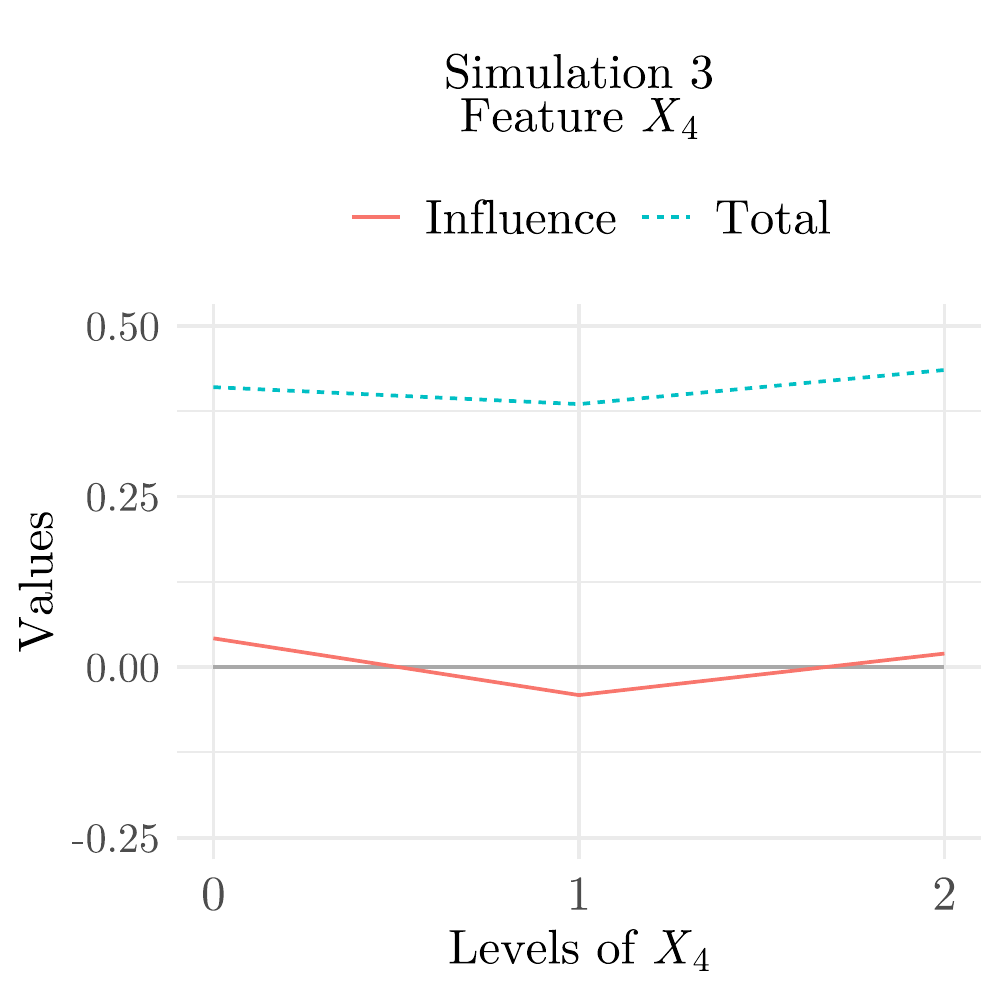}
		\end{center}
\end{minipage}
\caption{Influence and total influence for the features (Simulation 3).}
\label{fig:simulation3}
\end{figure}

In view of the previous results, our methodology seems to be appropriate to study the influence that the different feature values have on the classification of individuals. Since the experiments are satisfactory, this analytic tool can be applied to real-life problems. Consequently, this procedure has been employed on a real dataset concerning COVID-19 patients, whose results are presented in the next section. 

\section{Application of our influence measure to COVID-19 data}\label{sec:covid}

This section analyses a database of 10,454 patients from Galicia (a region in the northwest of Spain) infected with COVID-19 from March 6, 2020 to May 7, 2020. The objective is to study the influence of various patients' characteristics in three binary response variables of special interest: the need for hospitalisation, the need for ICU admission, and the eventual decease. The emphasis is not on the predictive classification of new patients, but on the analysis of the characteristics that influenced the patients whose complete history is known to have a positive response in the binary variables indicated.
On the other hand, what follows is not intended to be an exhaustive study of these data to draw definitive conclusions about the evolution of COVID-19, but simply an illustration of some of the uses of the measure of influence we introduced in Section~\ref{sec:theory}.

The features or attributes  which have been considered in this study are the following: 
\begin{itemize}
	 \setlength\itemsep{0pt}
	\item
	{\bf Sex}: 0 (woman), 1 (man).
	\item
	{\bf Age}: 0 (0-49 y/o), 1 (50-64 y/o), 2 (65-79 y/o), 3 (80 y/o and over).
	\item
	{\bf Cardiovascular diseases}: 0 (without diseases), 1 (mild diseases), 2 (severe diseases: ischaemia with angina, infarction, stroke).
	\item
	{\bf Respiratory diseases}: 0 (no diseases), 1 (mild diseases), 2 (severe diseases: malignancy, COPD, pneumonia).
	\item
	{\bf Metabolic diseases}: 0 (no diseases), 1 (mild diseases), 2 (severe diseases: malignancy, insulin-dependent diabetes).
	\item
	{\bf Urinary diseases}: 0 (none or mild diseases), 1 (severe diseases: malignancy, kidney failure).
\end{itemize}

The binary response variables considered in this application are:
\begin{itemize}
	 \setlength\itemsep{0pt}
	\item
	{\bf Decease (exitus)}: 0 (no), 1 (yes).
	\item
	{\bf ICU admission}: 0 (no), 1 (yes).
	\item
	{\bf Need for hospitalisation}: 0 (no), 1 (yes).
\end{itemize}

Next, we applied the methodology outlined in Section~\ref{sec:theory} to measure the influence of the features in the classification with respect to the binary response variables. For instance, the interest would reside in selecting those individuals who resulted in decease (that is, $\texttt{decease}=1$) when our purpose is to know the most influential attributes for the exitus. 
Note that to estimate the influence of feature $X_j$ on $Y$, we use the influence that $X_j$ has in the classification of the elements of the sample ${\cal M}$ using an excellent classifier, since it is precisely trained with the sample ${\cal M}$. As in the previous section, we use the random forest classifier introduced by \cite{Breiman2001} and implemented in \texttt{R} through the \texttt{RWeka} library.

 Let {\small $\{X_{1} = \texttt{sex}, X_{2} = \texttt{age}, X_{3} = \texttt{cardi}, X_{4}=\texttt{resp}, X_{5} = \texttt{meta}, X_{6} =~\texttt{uri}\}$} be the set of features. We start the analysis by presenting Figures~\ref{fig:decease}, \ref{fig:icu} and \ref{fig:hos}, which display the influence and total influence of the different features' values on the three classification problems.
 Let us explain in more detail what the graphics in the figures show. In each of the graphics a response variable is chosen and set its value to $1$, and also a feature is chosen. The graphic shows in red the measure of influence of the chosen feature when we set its value to each of the possible values it can take (feature influence), and in blue the sum of the measures of influence of all the features (total influence). The objective of these figures is to identify what we call {\em influence scenarios}. An influence scenario is detected when the total influence shown in the corresponding graphic deviates noticeably from zero. 
 
 \noindent
 \begin{figure}[H]
 	\begin{minipage}[c]{6.3cm}
 		\begin{center}
 			\includegraphics[width=5.5cm]{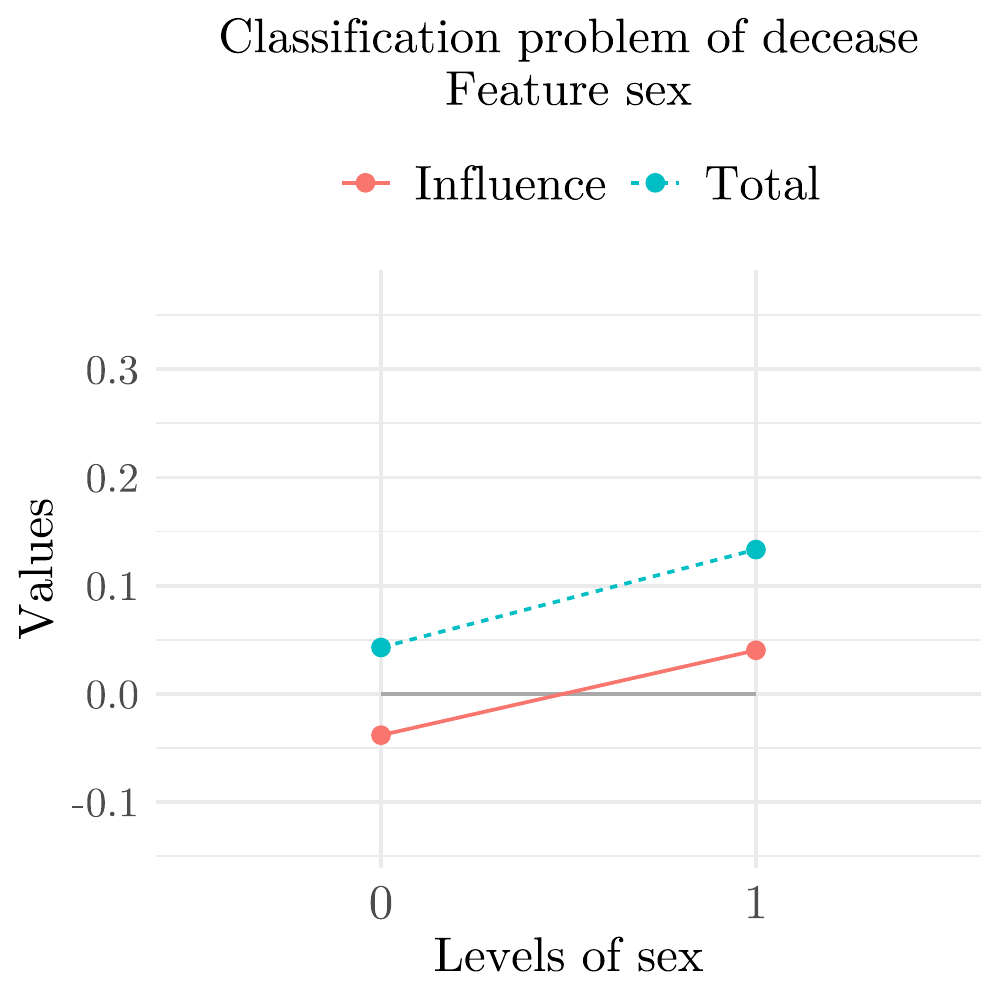}
 		\end{center}
 	\end{minipage}
 	\hspace{0.1cm}
 	\begin{minipage}[c]{6.3cm}
 		\begin{center}
 			\includegraphics[width=5.5cm]{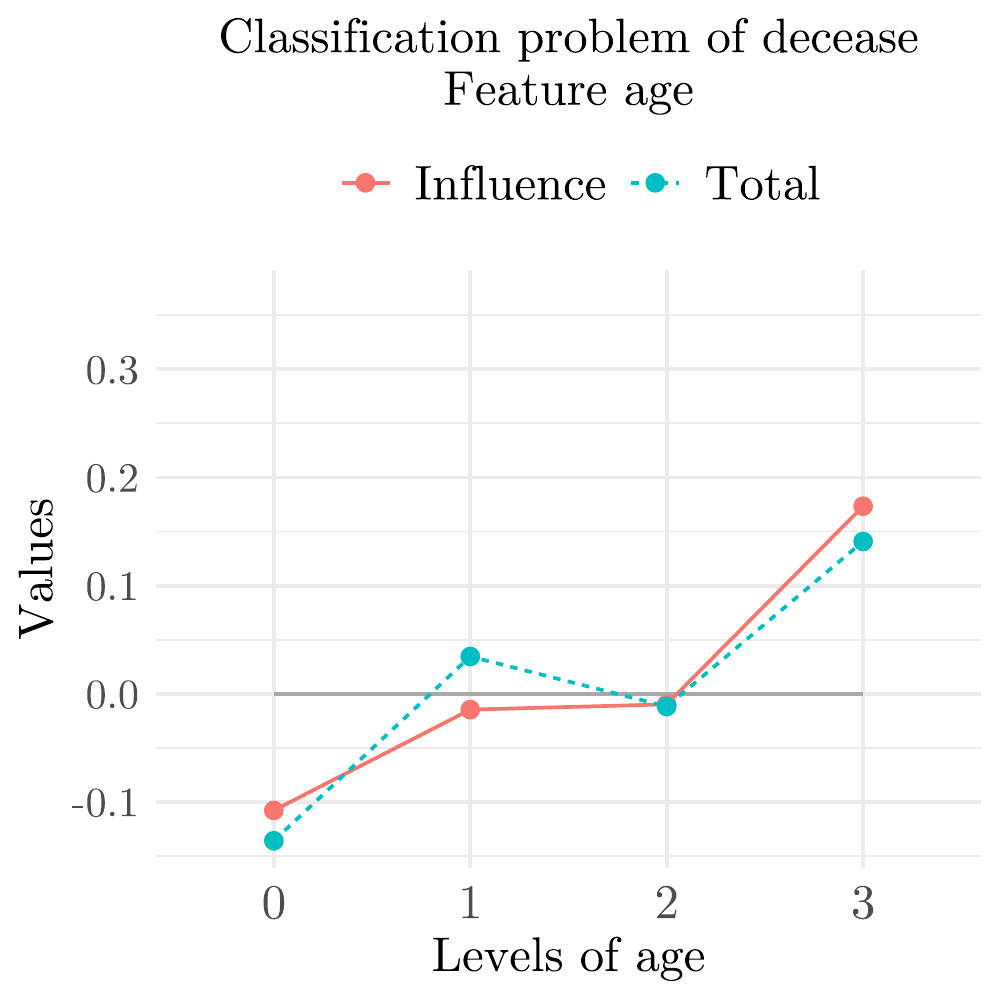}
 		\end{center}
 	\end{minipage}
 	
 	\noindent
 	\begin{minipage}[c]{6.3cm}
 		\begin{center}
 			\includegraphics[width=5.5cm]{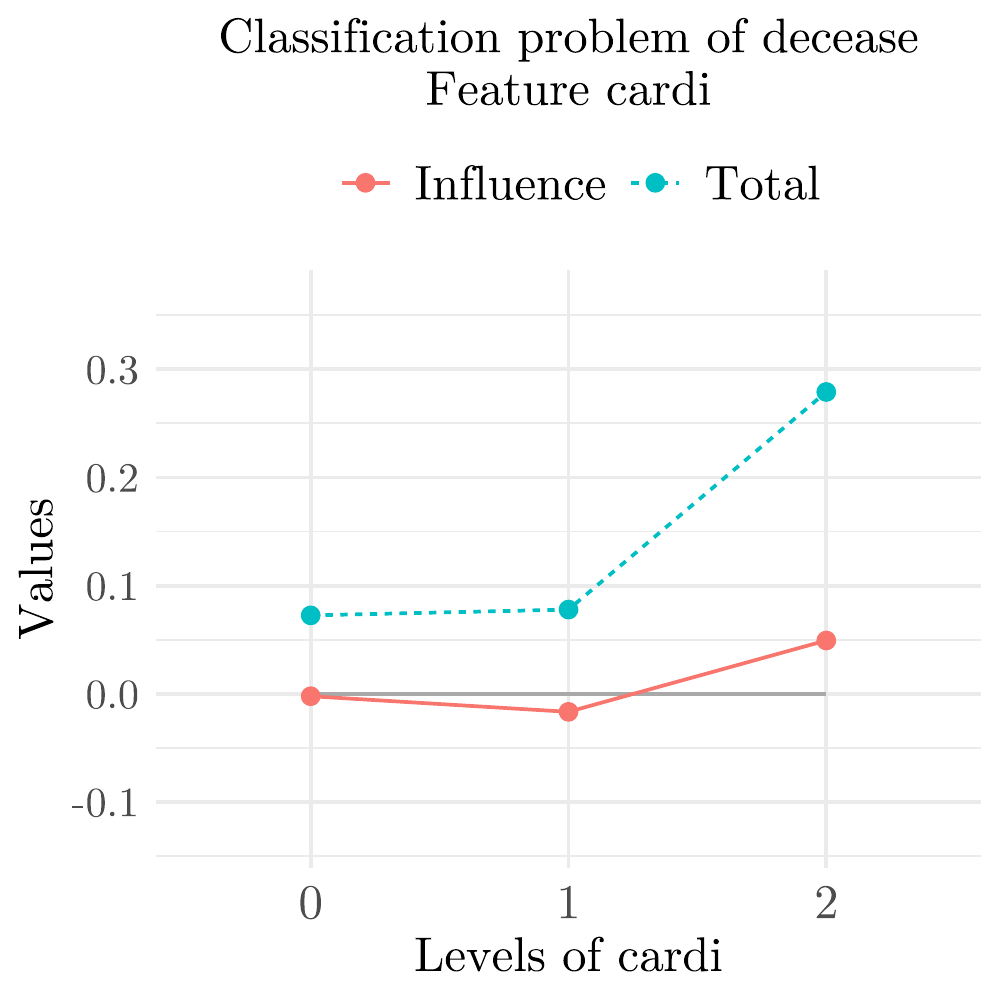}
 		\end{center}
 	\end{minipage}
 	\hspace{0.2cm}
 	\begin{minipage}[c]{6.3cm}
 		\begin{center}
 			\includegraphics[width=5.5cm]{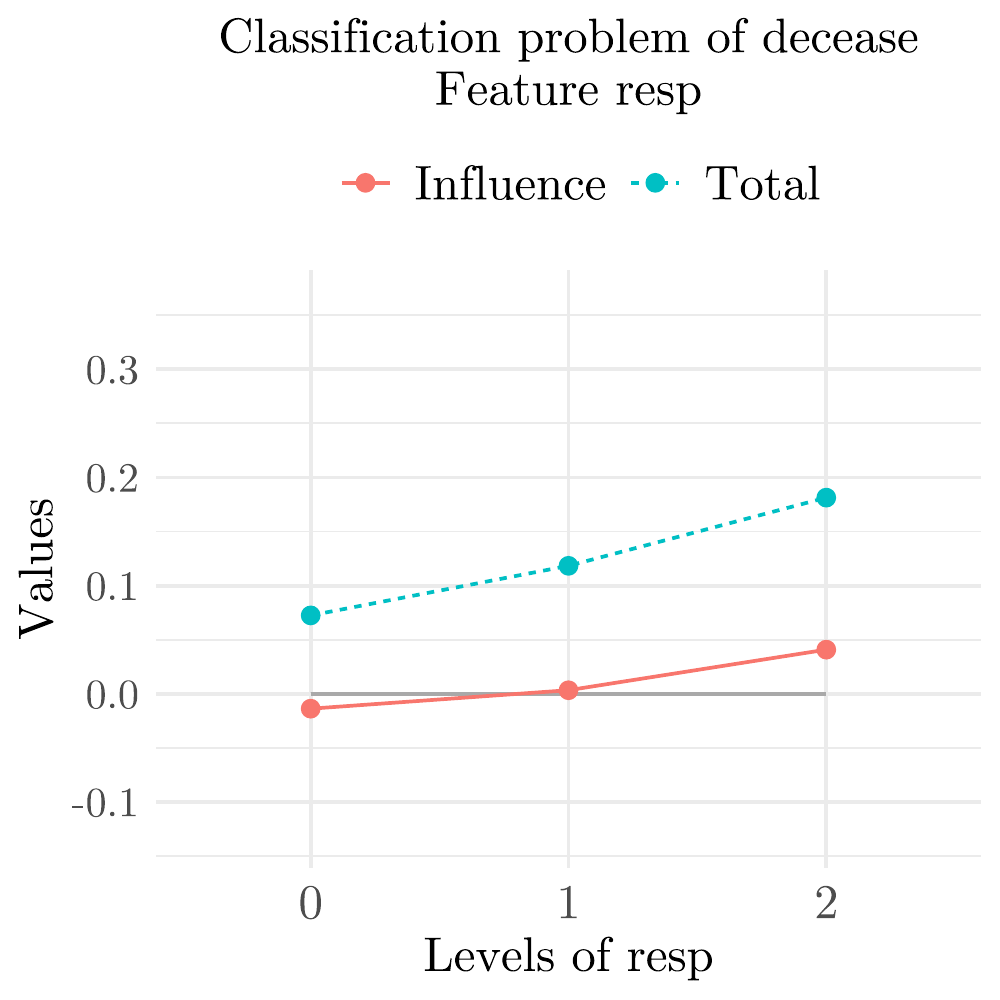}
 		\end{center}
 	\end{minipage}
 
 	\noindent
 	\begin{minipage}[c]{6.3cm}
 		\begin{center}
 			\includegraphics[width=5.5cm]{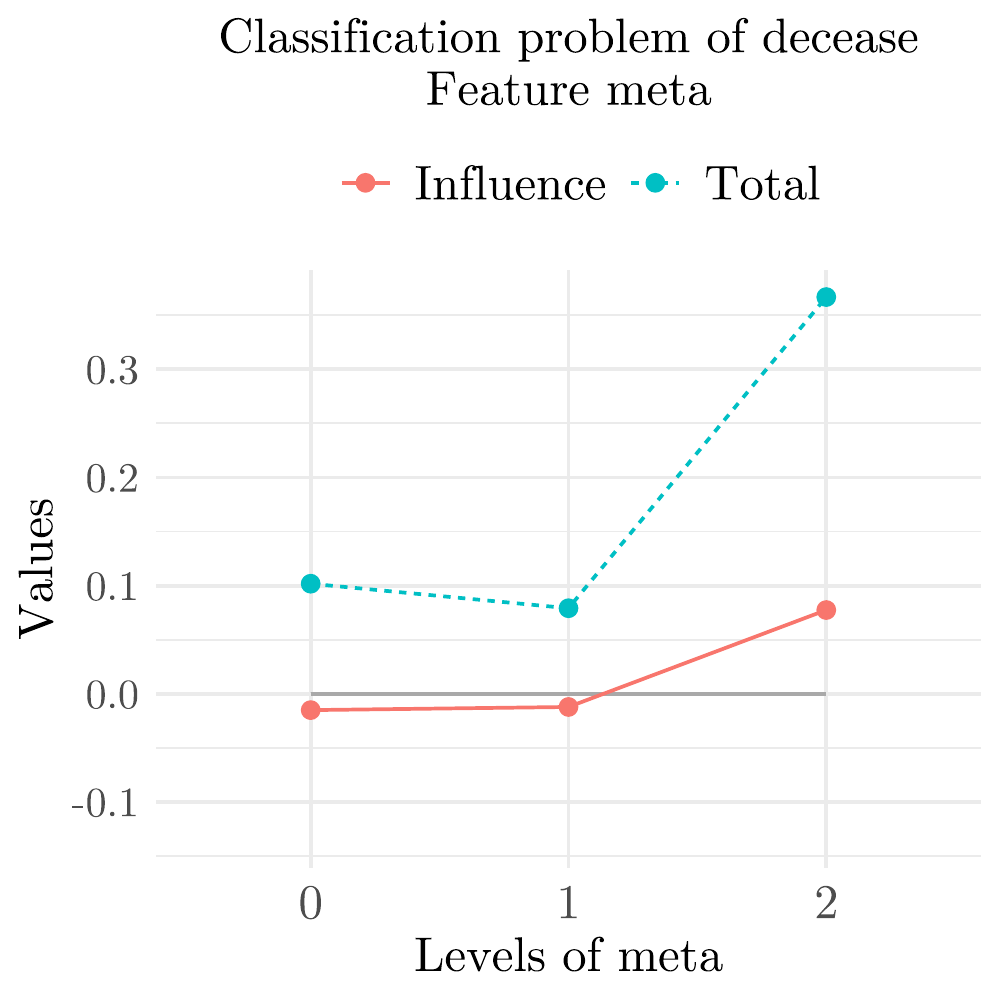}
 		\end{center}
 	\end{minipage}
 	\hspace{0.1cm}
 	\begin{minipage}[c]{6.3cm}
 		\begin{center}
 			\includegraphics[width=5.5cm]{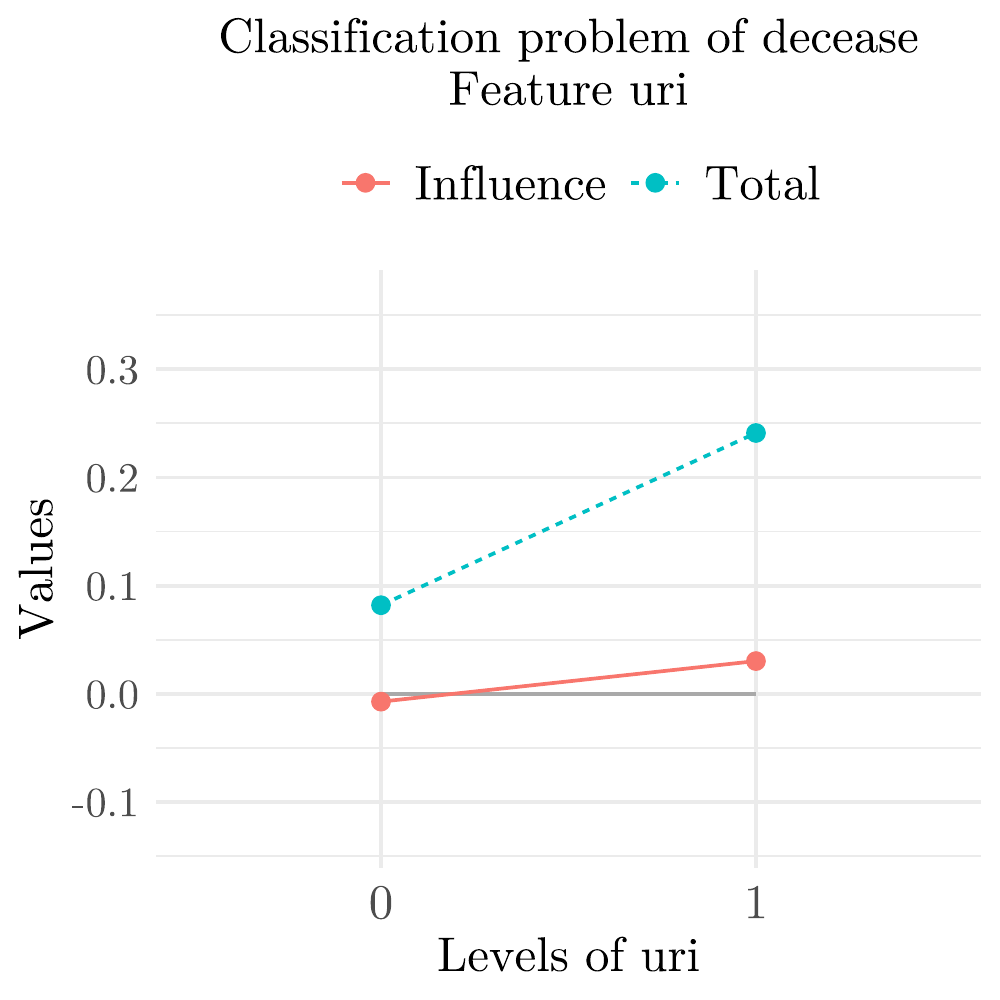}
 		\end{center}
 	\end{minipage}
 	\caption{Influence and total influence for the features on the decease.}
 	\label{fig:decease}
 \end{figure}

 \noindent
 \begin{figure}[H]
 	\begin{minipage}[c]{6.3cm}
 		\begin{center}
 			\includegraphics[width=5.5cm]{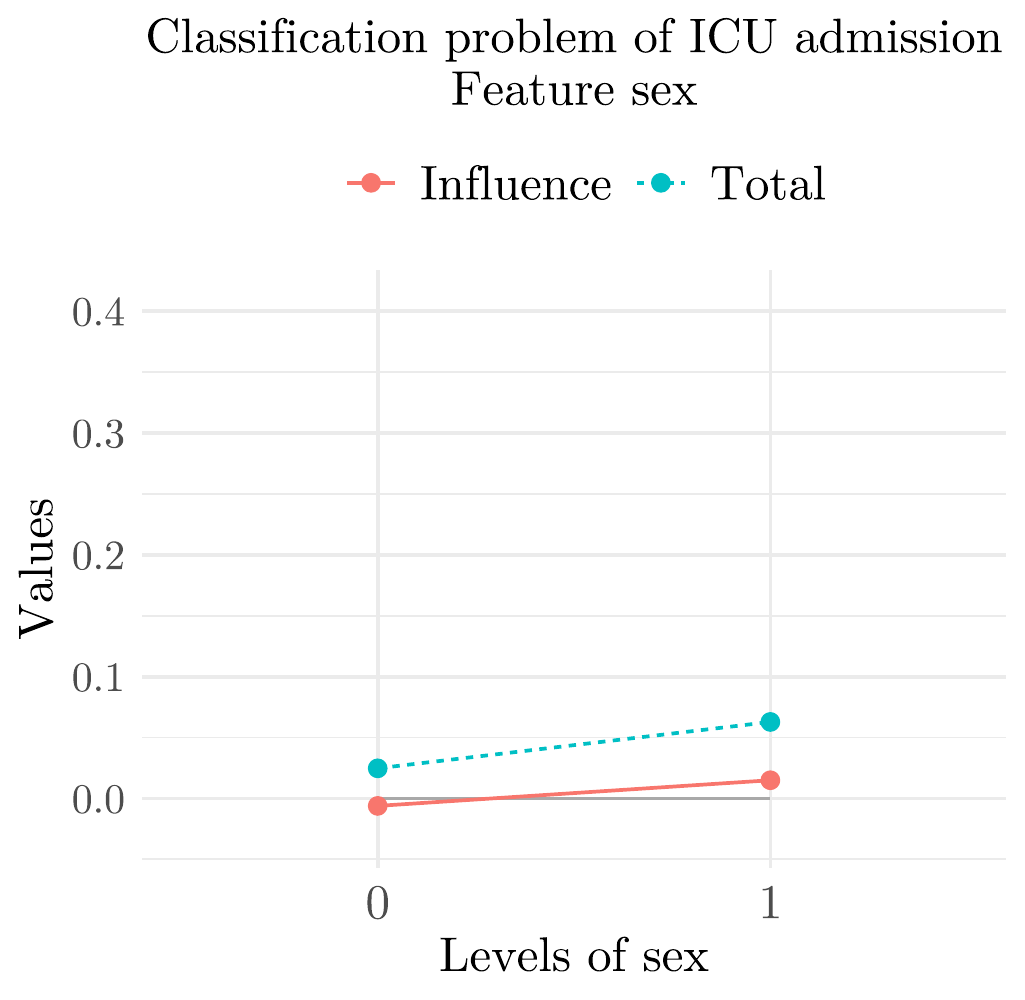}
 		\end{center}
 	\end{minipage}
 	\hspace{0.1cm}
 	\begin{minipage}[c]{6.3cm}
 		\begin{center}
 			\includegraphics[width=5.5cm]{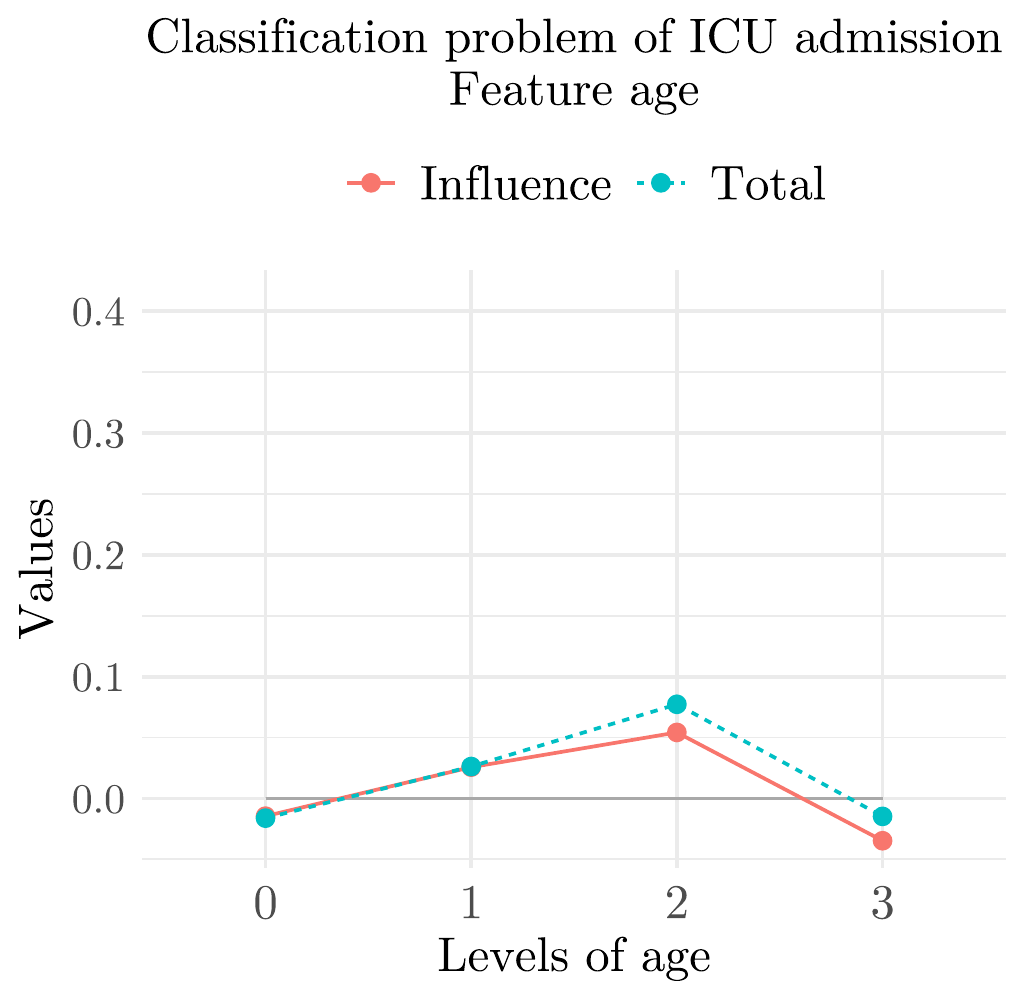}
 		\end{center}
 	\end{minipage}
 	
 	\noindent
 	\begin{minipage}[c]{6.3cm}
 		\begin{center}
 			\includegraphics[width=5.5cm]{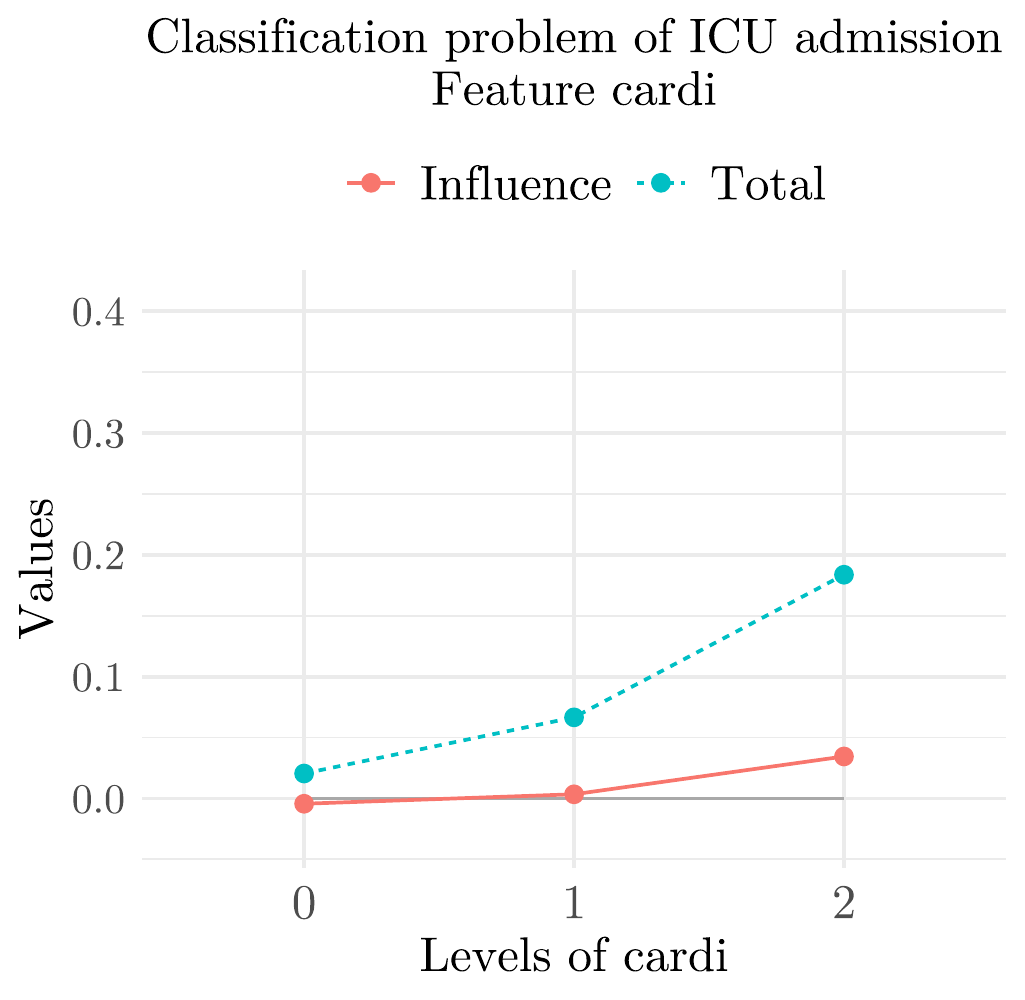}
 		\end{center}
 	\end{minipage}
 	\hspace{0.2cm}
 	\begin{minipage}[c]{6.3cm}
 		\begin{center}
 			\includegraphics[width=5.5cm]{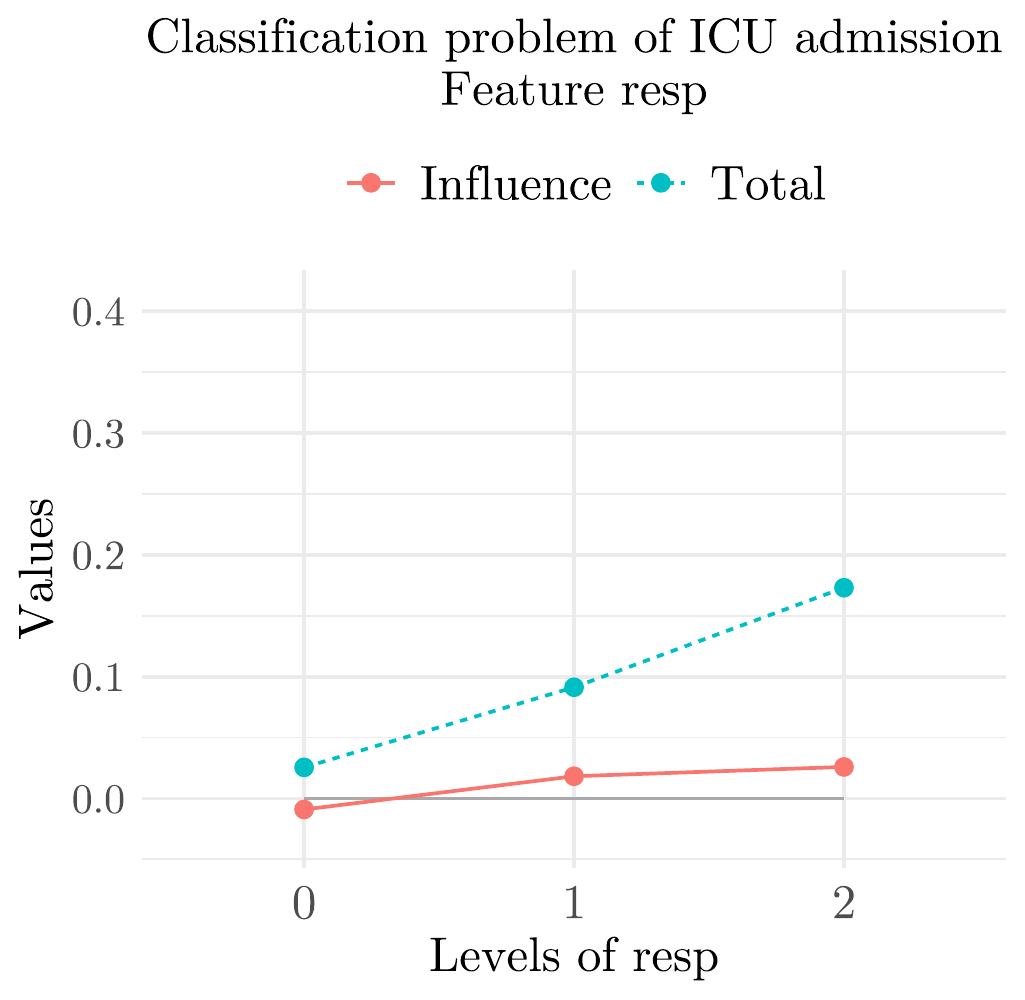}
 		\end{center}
 	\end{minipage}
 	
 	\noindent
 	\begin{minipage}[c]{6.3cm}
 		\begin{center}
 			\includegraphics[width=5.5cm]{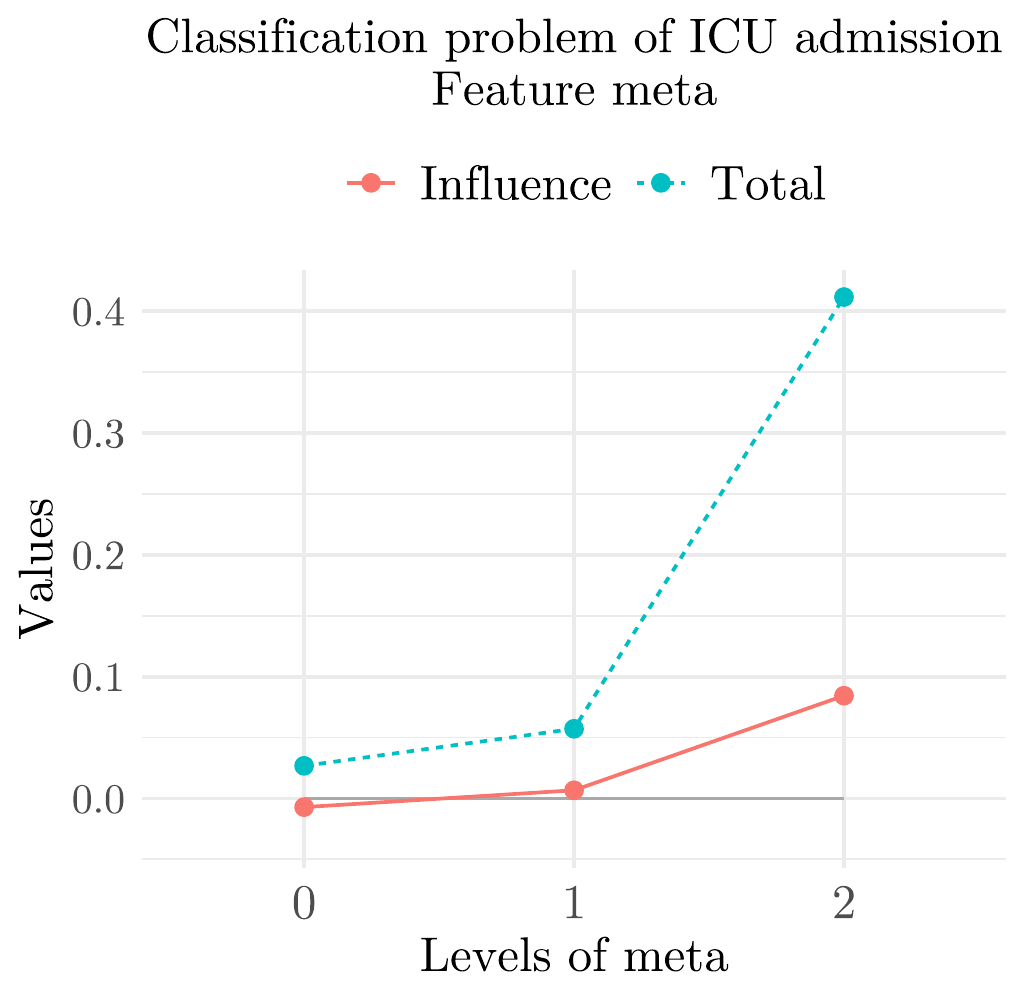}
 		\end{center}
 	\end{minipage}
 	\hspace{0.1cm}
 	\begin{minipage}[c]{6.3cm}
 		\begin{center}
 			\includegraphics[width=5.5cm]{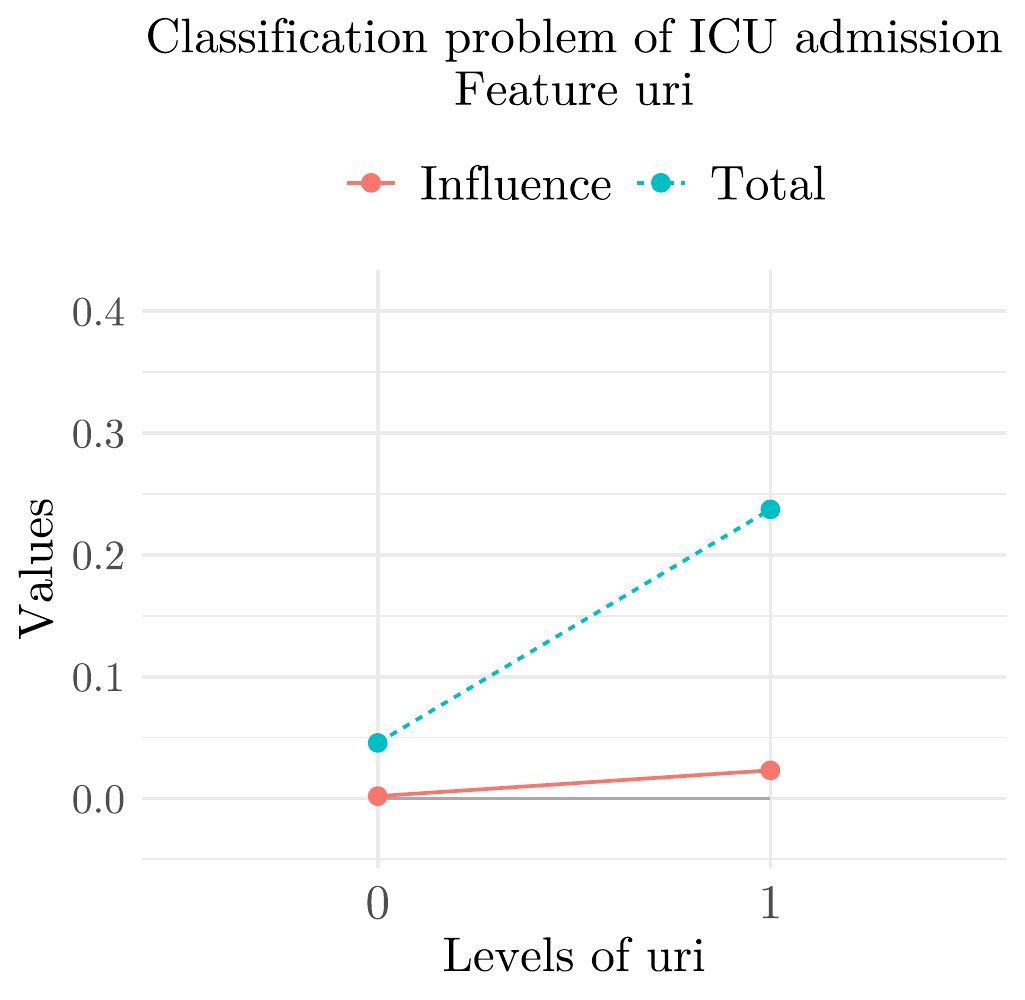}
 		\end{center}
 	\end{minipage}
 	\caption{Influence and total influence for the features on the ICU admission.}
 	\label{fig:icu}
 \end{figure}

 \noindent
 \begin{figure}[H]
 	\begin{minipage}[c]{6.3cm}
 		\begin{center}
 			\includegraphics[width=5.5cm]{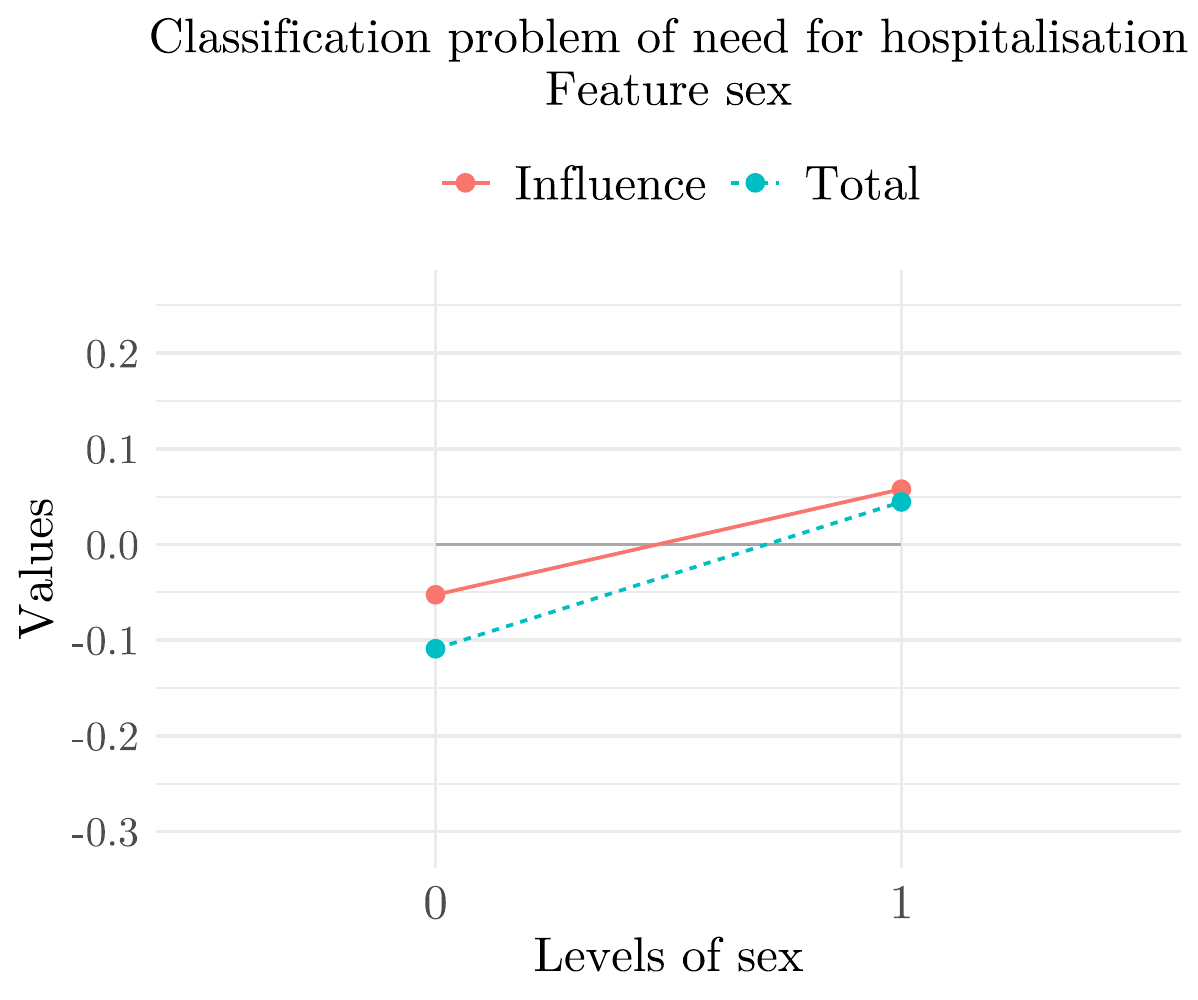}
 		\end{center}
 	\end{minipage}
 	\hspace{0.1cm}
 	\begin{minipage}[c]{6.3cm}
 		\begin{center}
 			\includegraphics[width=5.5cm]{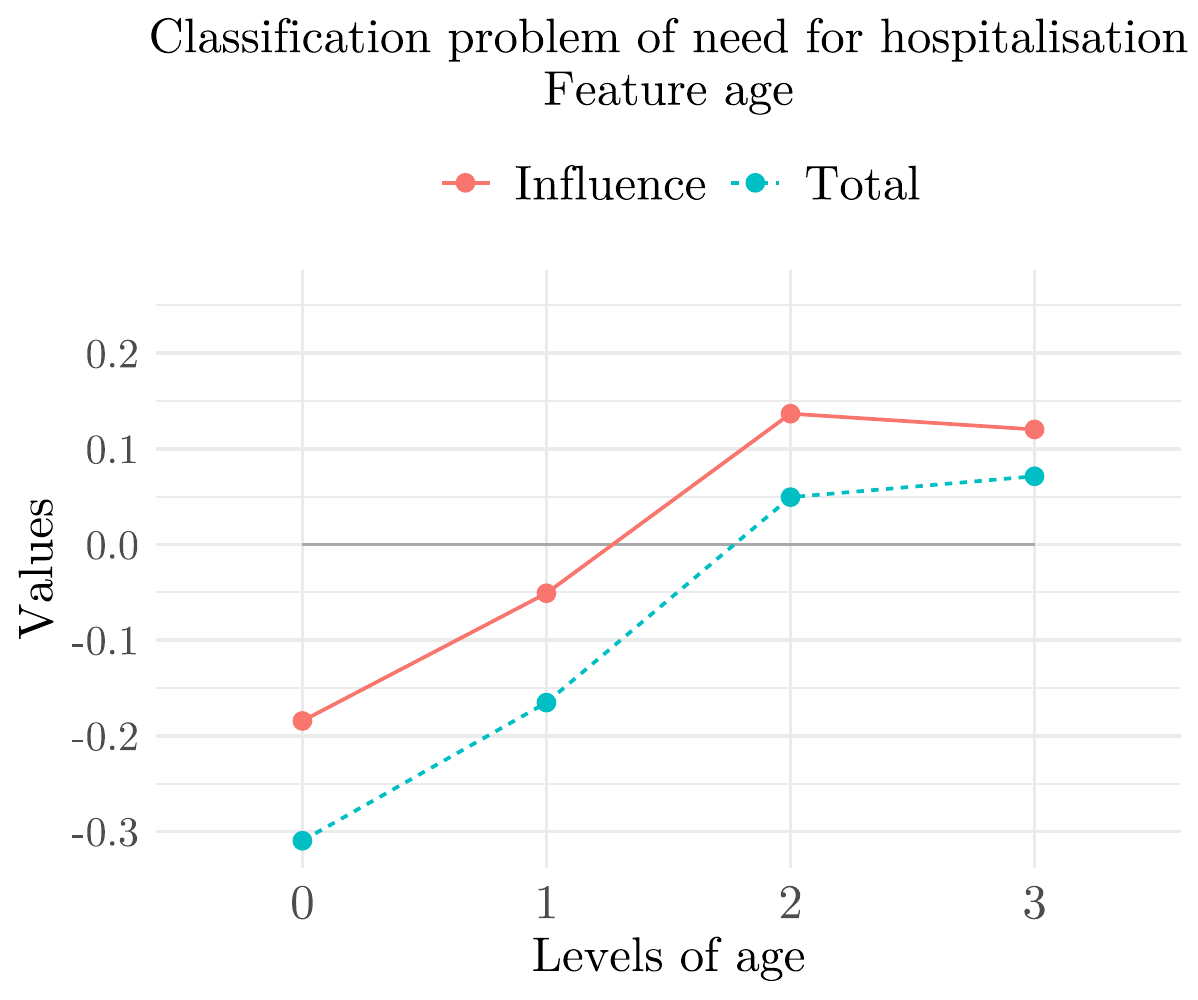}
 		\end{center}
 	\end{minipage}
 	
 	\noindent
 	\begin{minipage}[c]{6.3cm}
 		\begin{center}
 			\includegraphics[width=5.5cm]{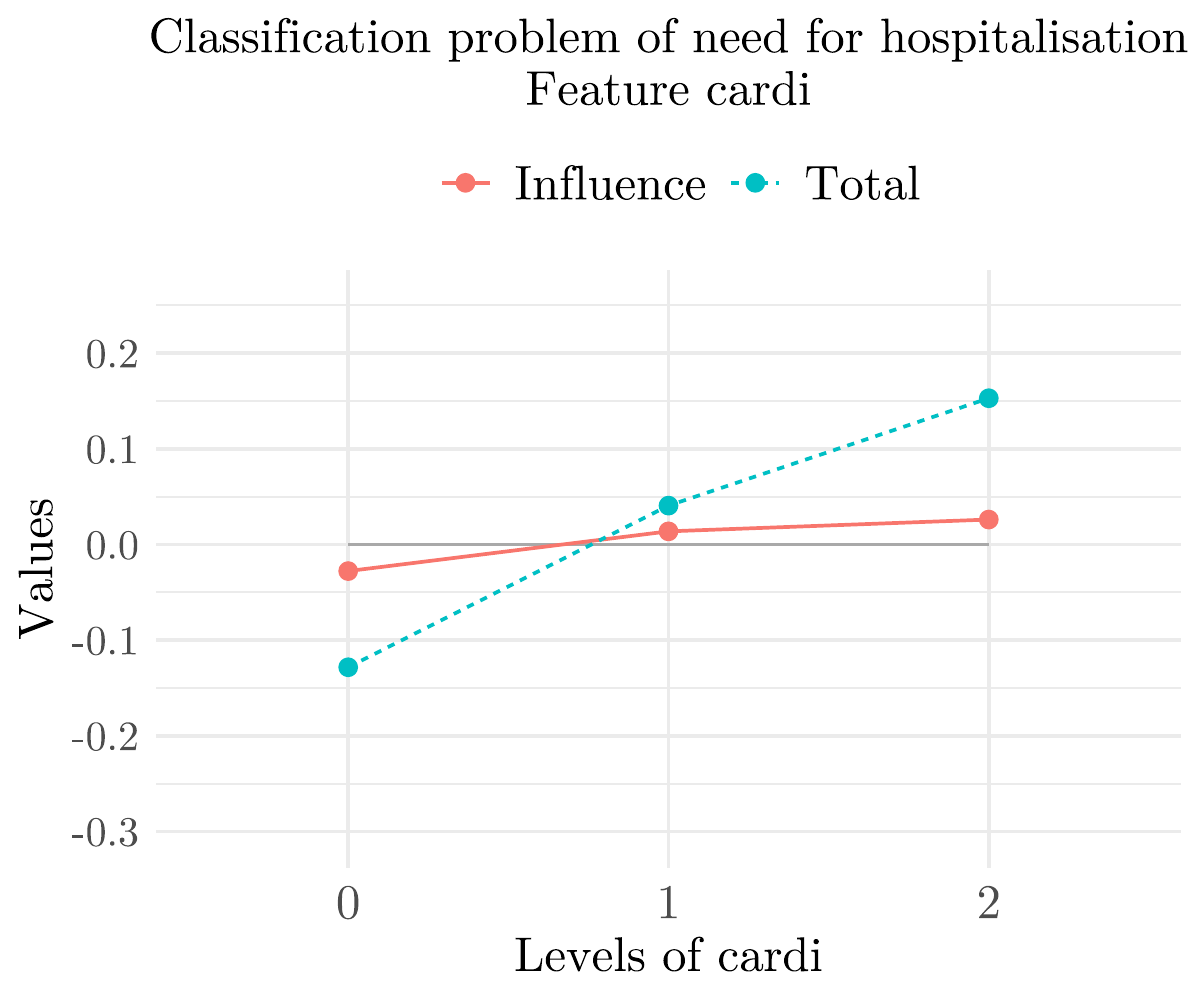}
 		\end{center}
 	\end{minipage}
 	\hspace{0.2cm}
 	\begin{minipage}[c]{6.3cm}
 		\begin{center}
 			\includegraphics[width=5.5cm]{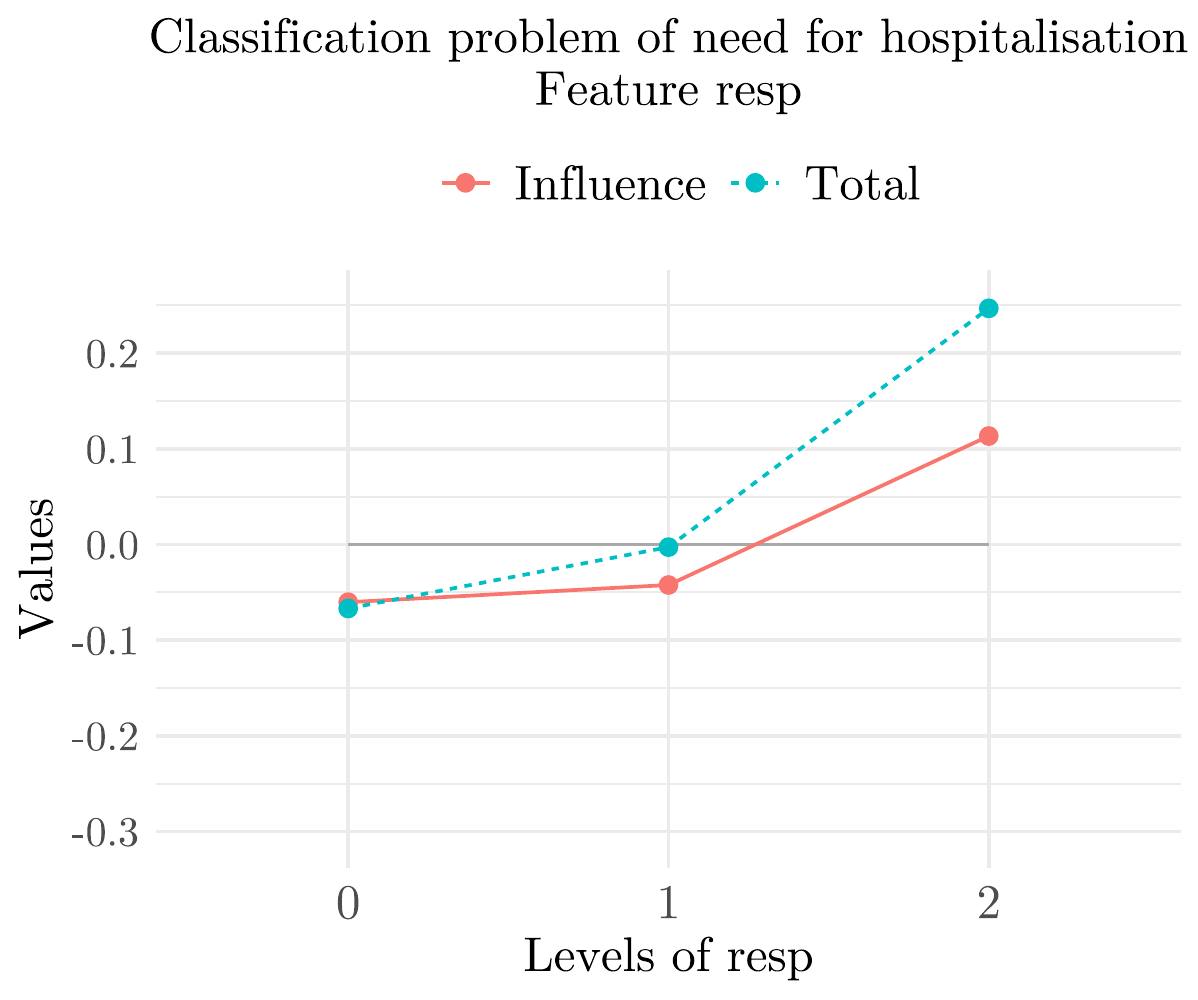}
 		\end{center}
 	\end{minipage}
 	
 	\noindent
 	\begin{minipage}[c]{6.3cm}
 		\begin{center}
 			\includegraphics[width=5.5cm]{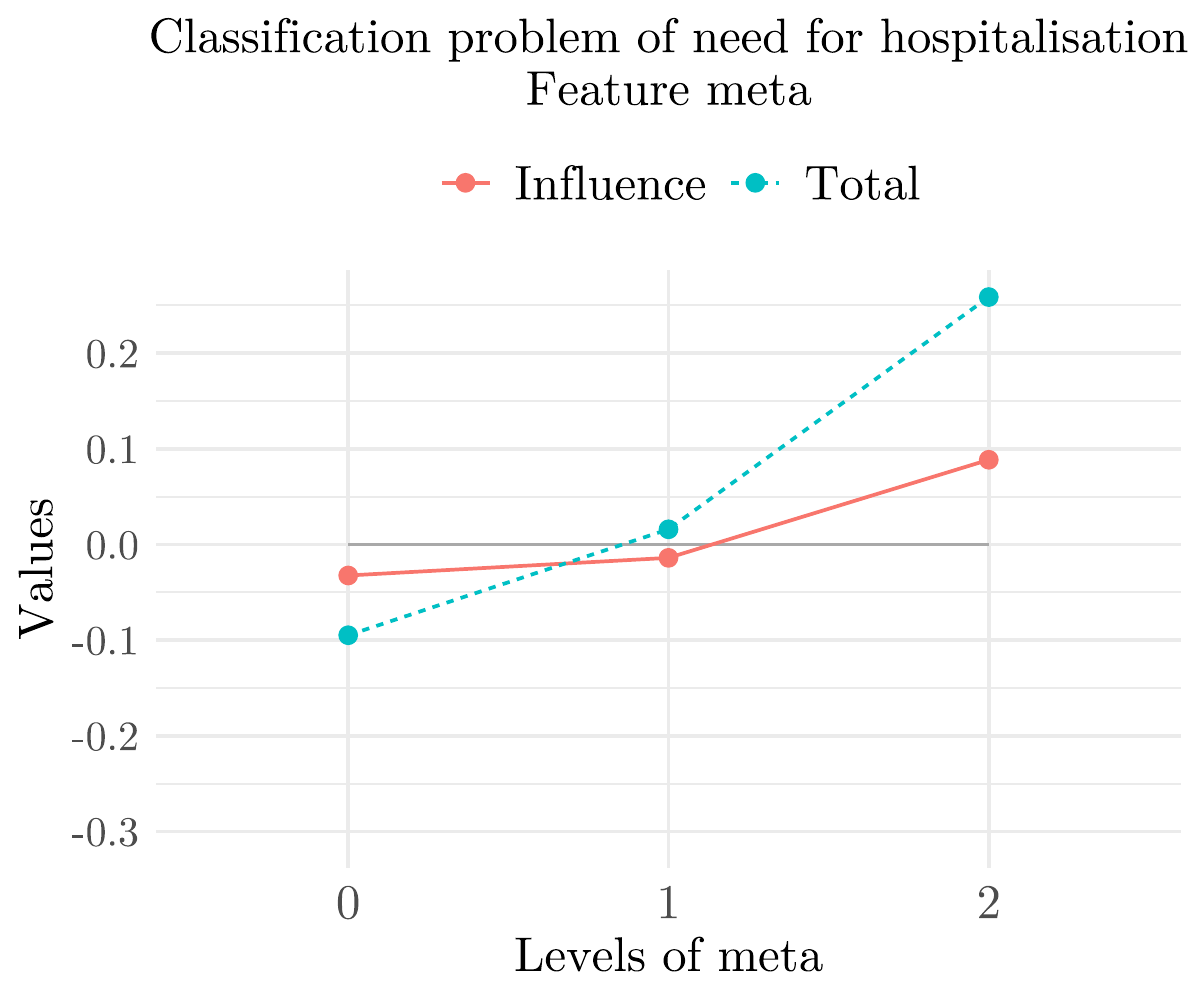}
 		\end{center}
 	\end{minipage}
 	\hspace{0.1cm}
 	\begin{minipage}[c]{6.3cm}
 		\begin{center}
 			\includegraphics[width=5.5cm]{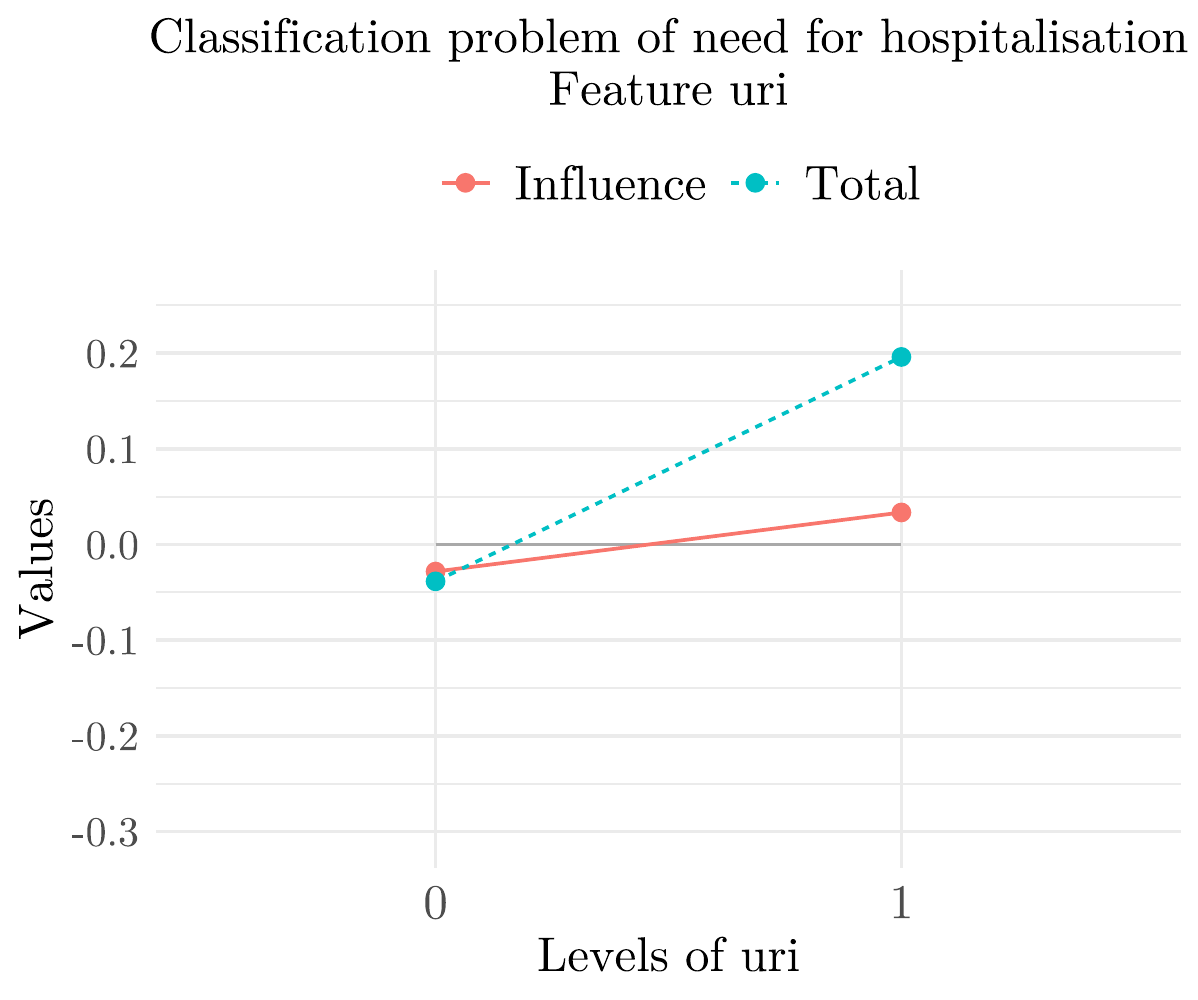}
 		\end{center}
 	\end{minipage}
 	\caption{Influence and total influence for the features on the need for hospitalisation.}
 	\label{fig:hos}
 \end{figure}

 For example, in Figure~\ref{fig:decease} several influence scenarios can be identified. The first is the case of \texttt{age}, both when it is worth $0$ and when it is worth $3$. There are two influence scenarios here that allow us to state that in the case of young individuals ($\texttt{age}=0$) and in the case of old individuals ($\texttt{age}=3$) we detect an important influence of the features on mortality, negative in the first case and positive in the second. We can observe that in this graphic the red and blue lines (\texttt{age} influence and total influence, respectively) are very close, which means that this total influence is mainly due to age.
 
 Other influence scenarios that can be inferred from the figure are those corresponding to the feature \texttt{cardi} being $2$ and the feature \texttt{meta} being $2$. Note, however, that in such scenarios the red and blue lines are noticeably separated, which means that the significant total influence detected is not primarily due to the features chosen in each case. Therefore, for each of these two scenarios, Table~\ref{tab:decease_scenarios} presents the value of the influence measure for all the features, in order to identify which ones are influencing the most.

  \begin{table}[H]
 	\begin{center}
 		\resizebox{10cm}{!}{
 		\begin{tabular}{|c|c|c|c|c|c|c||c|}
 			\hline
 			& \texttt{sex} & \texttt{age} &  \texttt{cardi} &  \texttt{resp} &  \texttt{meta} &  \texttt{uri} & Total \\
 			\hline
 			$\texttt{cardi}=2$ & 0.025 & \textbf{0.151} & \textbf{0.049} & 0.016 & 0.014 & 0.023 & 0.279\\ 
 			\hline
 			$\texttt{meta}=2$ & 0.035 & \textbf{0.142} & 0.034 & 0.062 & \textbf{0.078} & 0.017 & 0367\\
 			\hline 		
 		\end{tabular}
 	}
 	\caption{Influence measure. Decease = 1.}
 	\label{tab:decease_scenarios}
 \end{center}
 \end{table} \vspace{-0.5cm}
  From Table~\ref{tab:decease_scenarios} it can be seen that \texttt{age} is the most influential feature in these two scenarios, although the features chosen in each case (\texttt{cardi} and \texttt{meta}, respectively) are the second most influential.

 Figure~\ref{fig:icu} shows, surprisingly, a minor influence of age on ICU admissions. This is probably because in the first wave of COVID-19 in Spain, a considerable number of elderly died in nursing homes before they could even be hospitalised or admitted to ICU. In any case, \texttt{age} generates an influence scenario when it is worth $2$. As in Figure~\ref{fig:decease}, in the case of age the blue and red lines are very close, showing that the total influence in this particular situation is mainly due to age.
 
 Another influence scenario presented in Figure~\ref{fig:icu} is the one corresponding to the \texttt{meta} feature being equal to $2$. In that case, the blue and red lines are far apart, so we show in Table~\ref{tab:icu_scenarios} the value of the influence measure for all features. It can be observed that all features are influential, although the most influential are, in this order, age and metabolic diseases.
 
  \begin{table}[H]
 	\begin{center}
 		\resizebox{10cm}{!}{
 		\begin{tabular}{|c|c|c|c|c|c|c||c|}
 			\hline
 			& \texttt{sex} & \texttt{age} &  \texttt{cardi} &  \texttt{resp} &  \texttt{meta} &  \texttt{uri} & Total \\
 			\hline
 			$\texttt{meta}=2$ & 0.064 & \textbf{0.098} & 0.072 & 0.071 & \textbf{0.084} & 0.022 & 0.412\\ 
 			\hline 		
 		\end{tabular}
 	}
 	\caption{Influence measure. ICU admission = 1.}
 	\label{tab:icu_scenarios}
 \end{center}
 \end{table} \vspace{-0.5cm}
 Figure~\ref{fig:hos} allows us to identify other influence scenarios, among which we highlight those corresponding to \texttt{age} equal to $0$, \texttt{meta} equal to $2$ and \texttt{resp} equal to $2$. In this case, although the blue and red lines tend to coincide more in the \texttt{age} feature, they are considerably separated in all the influence scenarios. Therefore, we show in Table~\ref{tab:hos_scenarios} the value of the influence measure for all features in the three scenarios.

  \begin{table}[H]
	\begin{center}
		\resizebox{10cm}{!}{
		\begin{tabular}{|c|c|c|c|c|c|c||c|}
			\hline
			& \texttt{sex} & \texttt{age} &  \texttt{cardi} &  \texttt{resp} &  \texttt{meta} &  \texttt{uri} & Total \\
			\hline
			$\texttt{age}=0$ & -0.001 & \textbf{-0.184} & -0.013 & \textbf{-0.047} & -0.030 & -0.034 & -0.310\\ 
			\hline
			$\texttt{resp}=2$ & 0.039 & \textbf{0.085} & -0.009 & \textbf{0.113} & 0.014 & 0.005 & 0.247\\ 
			\hline
			$\texttt{meta}=2$ & 0.026 & \textbf{0.095} & 0.054 & 0.012 & \textbf{0.089} & -0.017 & 0.247\\
			\hline 		
		\end{tabular}
	}
	\caption{Influence measure. Need for hospitalisation = 1.}
	\label{tab:hos_scenarios}
\end{center}
  \end{table}\vspace{-0.5cm}
  Again, age remains a highly influential feature in the occurrence of hospitalisation in all the influence scenarios we have detected. In the first scenario, when \texttt{age} is $0$, what we observe is that the marked tendency towards less hospitalisation when patients are young is mainly due to their youth, although we also detect an important influence of good health in terms of respiratory ailments. In the influence scenario when $\texttt{resp}=2$, the measure indicates that respiratory diseases are the most influential in the need for hospitalisation, even more so than age. Somehow we detect that respiratory pathologies, in addition to age, are considerably influential in the need for hospitalisation of COVID-19 patients.
  
  In light of the above, it is evident that the most influential feature in all the response variables considered is age: young people are less likely to need hospitalisation and admission to the ICU, as well as to die from COVID-19; the only exception we detected is that elderly people who die have a tendency to die quickly, even before being admitted to the ICU. 
  
  With this in mind, we could look further for other influential features by eliminating the age effect. That is, we can remove \texttt{age} from the list of features (i.e., following the notation in Section~\ref{sec:theory},  $T=K\setminus \{2\}$, where $X_{2}=\texttt{age})$ and calculate the corresponding measure of influence. Through this approach, the expectation is that fewer influential scenarios will be detected; but in detected cases, the most influential features after age may come to light. We perform this analysis for the sub-sample in which we have the largest number of observations: the one corresponding to need for hospitalisation equal to 1.

Figure~\ref{fig:hos_feat_T} seems to confirm the considerable influence of respiratory diseases on the need for hospitalisation of COVID-19 patients. Indeed, the only positive influence scenario detected occurs when $\texttt{resp}=2$. Note also that, in this case, the blue and red lines are close, so that the total influence detected is mostly due to respiratory pathologies.

There is another scenario of influence when $\texttt{cardi}=0$. In this case, it is striking that the red line is close to the point $(0,0)$. This seems to indicate that in healthy individuals regarding cardiac functions an important influence on the decrease in hospitalisations is detected, but however such a decrease is not due to the feature \texttt{cardi}. To detect which is the most influential feature in this case, we show in Table~\ref{tab:no_age_scenarios} the value of the measure of influence when $\texttt{cardi}=0$ and any other of the pathologies considered is also $0$. Notice that in these three cases, feature \texttt{resp} is the most influential by far. Once again, the data we handle seem to confirm the important influence of the presence of respiratory pathologies on the need for hospitalisation of COVID-19 patients.

\noindent
\begin{figure}[H]
	\begin{minipage}[c]{12.5cm}
		\begin{center}
			\includegraphics[width=5.5cm]{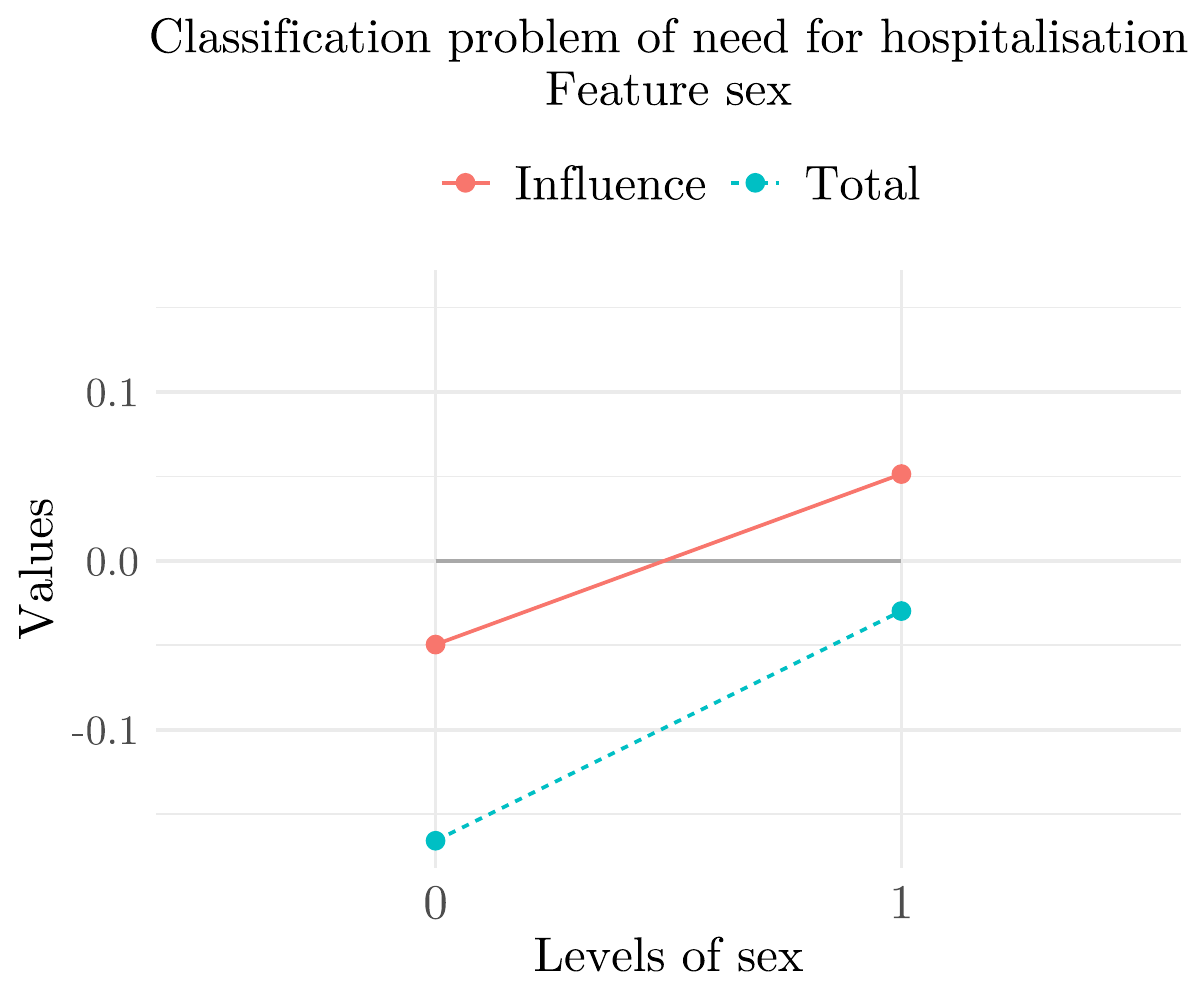}
		\end{center}
	\end{minipage}

	\noindent
	\begin{minipage}[c]{6.3cm}
		\begin{center}
			\includegraphics[width=5.5cm]{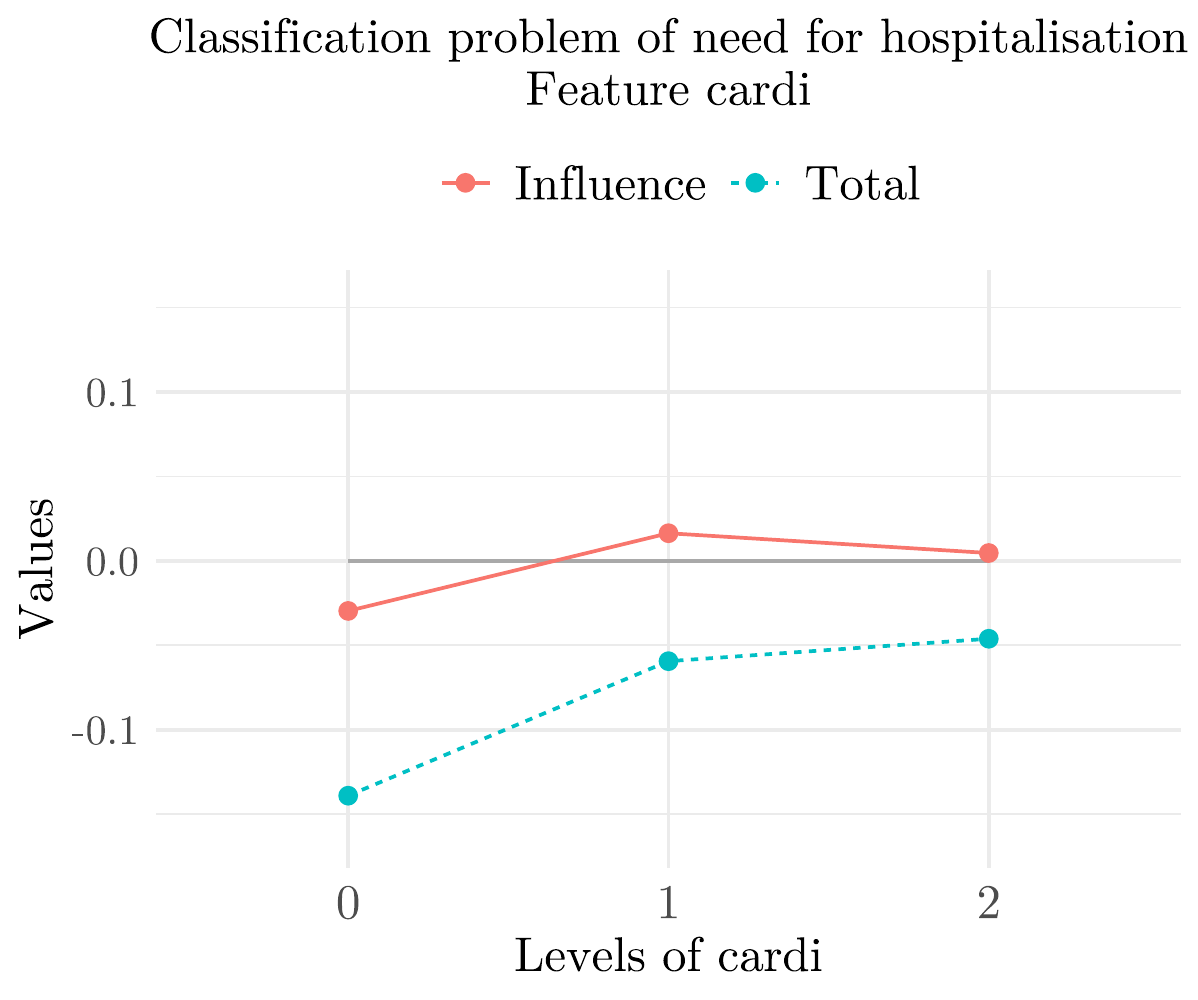}
		\end{center}
	\end{minipage}
	\hspace{0.2cm}
	\begin{minipage}[c]{6.3cm}
		\begin{center}
			\includegraphics[width=5.5cm]{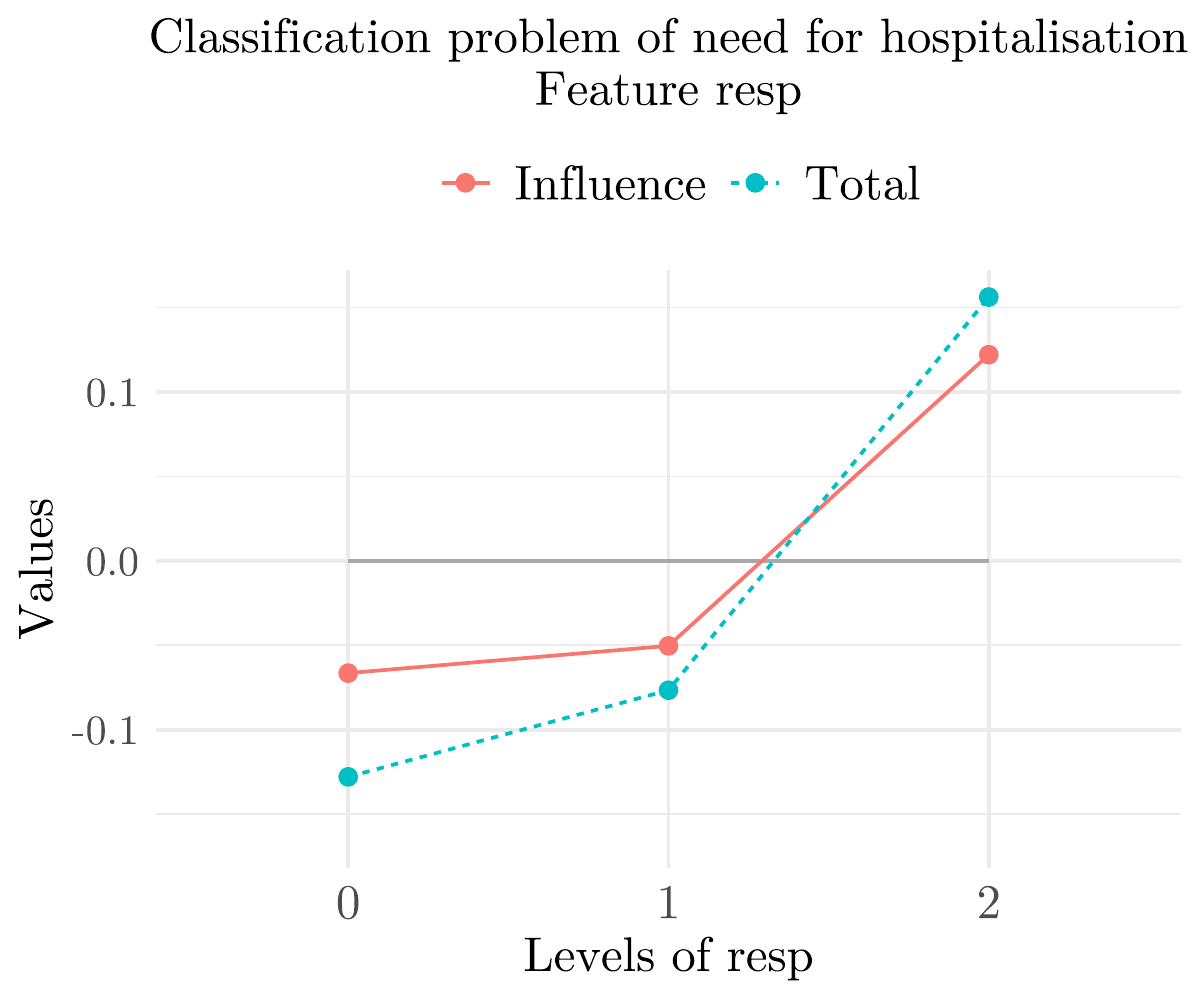}
		\end{center}
	\end{minipage}
	
	\noindent
	\begin{minipage}[c]{6.3cm}
		\begin{center}
			\includegraphics[width=5.5cm]{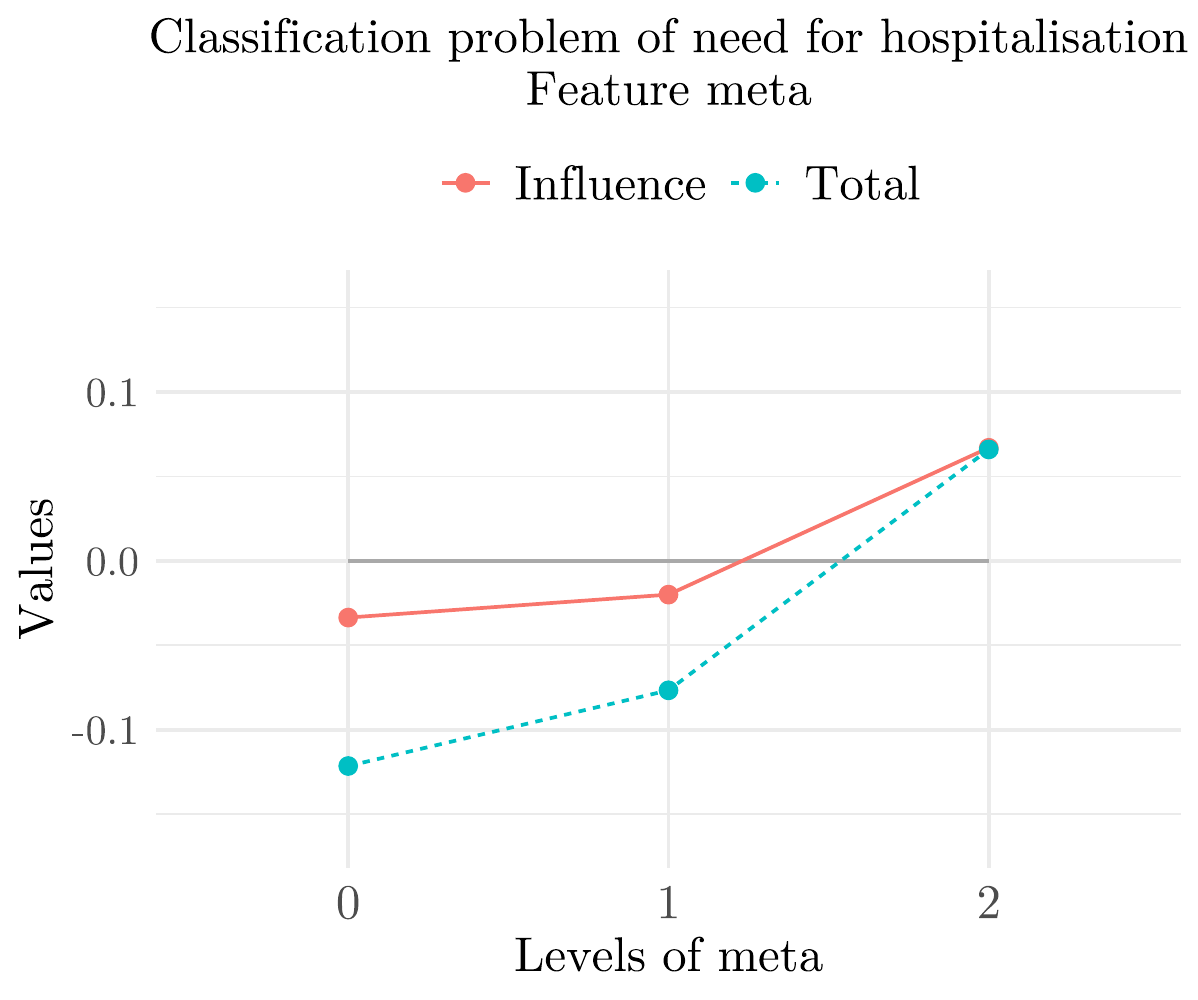}
		\end{center}
	\end{minipage}
	\hspace{0.1cm}
	\begin{minipage}[c]{6.3cm}
		\begin{center}
			\includegraphics[width=5.5cm]{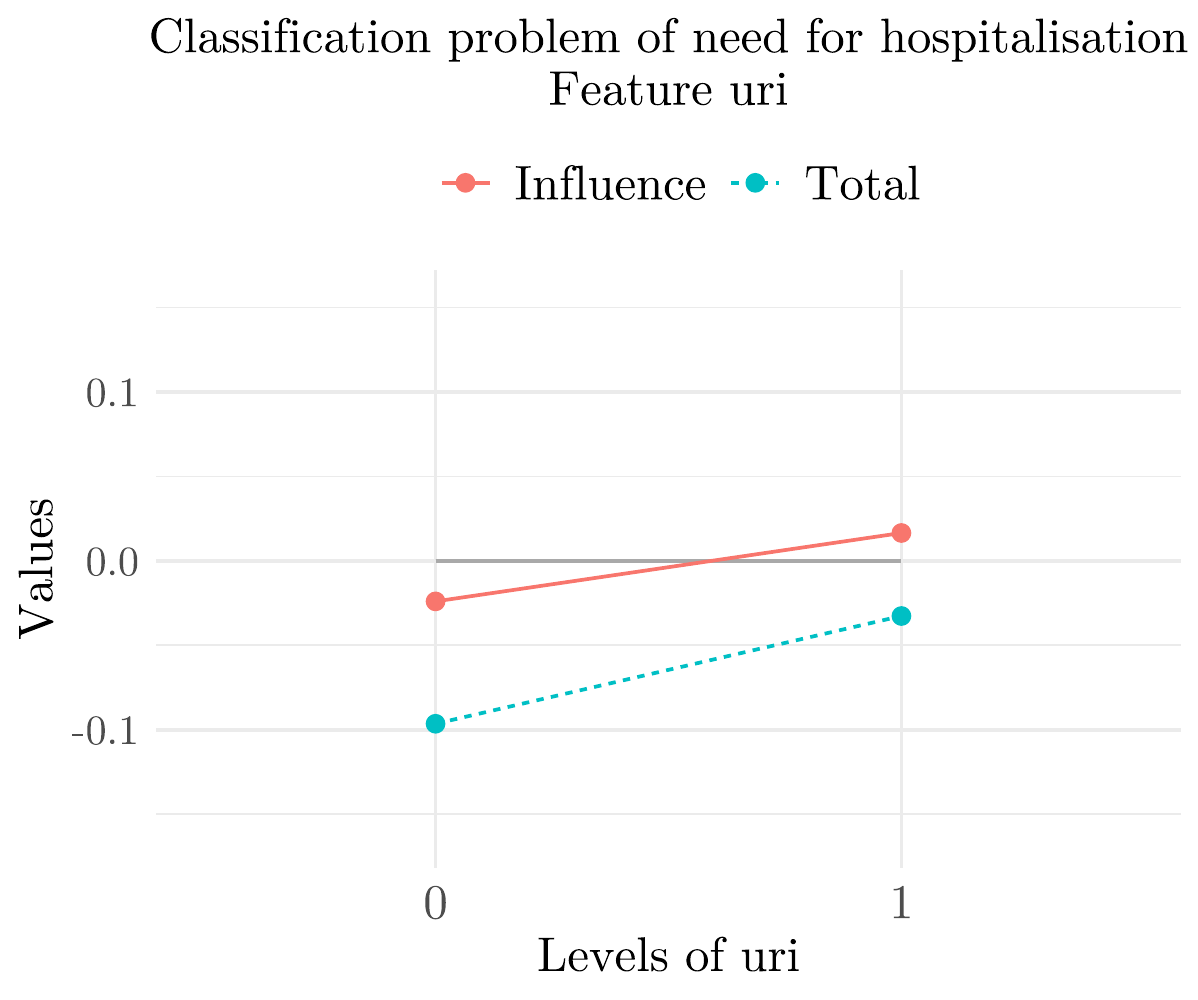}
		\end{center}
	\end{minipage}
	\caption{Influence and total influence for features $K\setminus \{2\}$ on the need for hospitalisation.}
	\label{fig:hos_feat_T}
\end{figure}

\begin{table}[H]
	\begin{center}
		\resizebox{10cm}{!}{
			\begin{tabular}{|c|c|c|c|c|c||c|}
				\hline
				& \texttt{sex} &   \texttt{cardi} &  \texttt{resp} &  \texttt{meta} &  \texttt{uri} & Total \\
				\hline
				$\texttt{cardi}=0$, $\texttt{resp}=0$ & 0.006 & -0.028 & \textbf{-0.063} & -0.025 & -0.046 &  -0.156\\ 
				\hline
				$\texttt{cardi}=0$, $\texttt{meta}=0$ & 0.009 & -0.038 & \textbf{-0.052} &  -0.033 & -0.037 & -0.151\\ 
				\hline
				$\texttt{cardi}=0$, $\texttt{uri}=0$ &  0.005  & -0.030 & \textbf{-0.054} &  -0.022 & -0.042 &  -0.143\\
				\hline 		
			\end{tabular}
		}
		\caption{Influence measure without considering \texttt{age}. Need for hospitalisation = 1.}
		\label{tab:no_age_scenarios}
	\end{center}
\end{table}\vspace{-0.7cm}

\section{Conclusions}\label{sec:conclusions}
This paper addresses and provides a methodological contribution to the important problem of classification, which is of great interest in machine learning. It introduces a new general measure of the influence that various features of a set of individuals have on their classification, that is, on the category or value they take for a given response variable. For the construction of such measure of influence, we consider several ideas taken from game theory. In particular, starting from the problem of measuring influence on classification, we define a cooperative game (whose players are the features considered) and apply a solution. This solution, known as the Shapley value, is closely connected with the idea of ``contribution'', and applied in this context to classification. Together with the definition of the measure of influence, an axiomatic characterisation theorem is stated and mathematically proved. The properties used in this result are adaptations of Shapley value's properties in the general context of cooperative games. The proposed adaptations yield highly desirable properties of the influence measure from the exploratory data analysis point of view. To test the scope and adequacy of the proposed influence measure, a control experiment  that provides a very satisfactory result is designed. Our proposal is also compared with the influence measure defined in \cite{Datta2015}, which also uses ideas from game theory.  Section~\ref{sec:covid} provides an application of our measure to the study of a Spanish database of patients infected with COVID-19 from the first wave of the pandemic, between March and May 2020. The aim of this application is to determine which demographic features, as well as previous pathologies, are the most influential in the classification of a patient regarding their potential need for hospitalisation, admission to the intensive care unit, or death. Initial results obtained present a promising future for the technique proposed here as a decision support tool, especially in the field of disease management. It serves, in particular, to alert medical professionals of the importance of certain patient characteristics, such as age or prior pathologies, as opposed to the lesser importance or influence of others. Such characteristics potentially pose an added difficulty in patients with a given disease, which should be taken into account both in the care and treatment that these patients should receive and in the planning of resources  destined for them.

As for future lines of research, we believe that additional work on the recently introduced measure of influence is worthwhile. We cite, for example, the desirability of further analysing the sensitivity of the results provided by the measure of influence according to the classifiers used. It would also be possible to complete the application presented using data from successive waves of the COVID-19 pandemic. In such case, it would be interesting to include a new variable distinguishing the virus strain, or even analyse the data separately depending on the type of strain, as it is known that new emerging strains behave differently.
 Finally, it may be appealing to further study the interest of this new measure of influence by exploring its relation with other statistical techniques of multivariate analysis, as well as extending it to continuous scenarios (for instance, considering that some of the features are continuous variables).

\section*{Acknowledgements}
The authors are grateful to Ricardo Cao Abad and to the {\it Direcci\'on Xeral de Sa\'ude P\'ublica} of the Xunta de Galicia in Spain. This work  has been supported by the ERDF, the Government of Spain/AEI [grants MTM2017-87197-C3-1-P and MTM2017-87197-C3-3-P]; the Xunta de Galicia [{\it Grupos de Referencia Competitiva}  ED431C-2016-015 and ED431C-2017/38, and {\it Centro Singular de Investigaci\'on de Galicia} ED431G/01]; and by the collaborative research project of the IMAT ``Mathematical, statistical and dynamic study of the epidemic COVID-19'', subsidized by the Vice-Rector's Office for Research and Innovation at the University of Santiago de Compostela, Spain. The research of Laura Davila-Pena has been funded by the Government of Spain [grant FPU17/02126]. We
would also like to thank the three anonymous referees and the editor for their constructive comments and suggestions,
which helped us to improve the final version of this paper.


\bibliography{mybibfile}

\end{document}